\documentclass[twoside,11pt]{article}
\usepackage{jmlr2e}

\usepackage{booktabs}
\usepackage{amsfonts}       %
\usepackage{nicefrac}       %
\usepackage{microtype}      %
\usepackage{amsmath}
\usepackage{amssymb}
\usepackage{algorithm}
\usepackage{algpseudocode}
\usepackage{flushend}
\usepackage{mathrsfs}
\usepackage{tikz}
\usepackage{pgfplots}
\usepackage{pgf}
\usepackage{bm}
\usepackage{graphicx}
\usepackage{lastpage}
\usepackage{epstopdf,enumitem}
\usepackage{tikz-cd}
\DeclareGraphicsExtensions{.eps}

\usepgflibrary{shapes}
\usetikzlibrary{%
	arrows,%
	decorations.text,%
	positioning,%
	scopes,%
	shapes%
}
\usepackage{subfigure}

\newtheorem{theorem}{Theorem}
\newtheorem{lemma}{Lemma}
\newtheorem{assumption}{Assumption}
\newtheorem{proposition}{Proposition}

\newtheorem{definition}{Definition}

\begin{document}
\ShortHeadings{Reinforcement Learning for Joint Optimization of Multiple Rewards}{Agarwal, Aggarwal}

\title{Reinforcement Learning for Joint Optimization of Multiple Rewards}  
\author{Mridul Agarwal and Vaneet Aggarwal\\Purdue University, West Lafayette IN 47907} 
	\editor{}
\maketitle

\begin{abstract}  %
Finding optimal policies which maximize long term rewards of Markov Decision Processes requires the use of dynamic programming and backward induction to solve the Bellman optimality equation. However, many real-world problems require optimization of an objective that is non-linear in cumulative rewards for which dynamic programming cannot be applied directly.
For example, in a resource allocation problem, one of the objectives is to maximize long-term fairness among the users. We notice that when an agent aim to optimize some function of the sum of rewards is considered, the problem loses its Markov nature. This paper addresses and formalizes the problem of optimizing a non-linear function of the long term average of rewards. We propose model-based and model-free algorithms to learn the policy, where the model-based policy is shown to achieve a regret of $\Tilde{O}\left(LKDS\sqrt{\frac{A}{T}}\right)$ for $K$ objectives combined with a concave $L$-Lipschitz function.  Further, using the fairness in cellular base-station scheduling, and queueing system scheduling as examples, the proposed algorithm is shown to significantly outperform the conventional RL approaches.
\end{abstract}

\if 0
\begin{IEEEkeywords}
	Reinforcement Learning; Non-Linear Reward Function; Multi-Agent Systems; Regret Analysis; Fairness %
\end{IEEEkeywords}
\fi

\section{Introduction}
Many practical applications of sequential decision making often have multiple objectives. For example, a hydro-power project may have conflicting gains with respect to power generation and flood management \citep{castelletti2013multiobjective}. Similarly, a building climate controller can have conflicting objectives of saving energy and maximizing comfort of the residents of the building \citep{kwak2012saves}. Video streaming applications also account for multiple objectives like stall duration and average video quality \citep{elgabli2018lbp}. Access of files from cloud storage aims to optimize the latency of file download and the cost to store the files \citep{xiang2015joint}. Many applications also require to allocate resources fairly to multiple clients \citep{lan2010axiomatic} which can be modelled as optimizing a function of the objectives of the individual clients. This paper aims to provide a novel formulation for decision making among multiple objectives using reinforcement learning approaches and to analyze the performance of the proposed algorithms.

We consider a setup where we want to optimize a possibly nonlinear joint objective function of long-term rewards  of all the objectives (or, different objectives). As an example, many problems in resource allocation for networking and computation resources use fairness among the long-term average rewards of the users as the metric of choice \citep{lan2010axiomatic,kwan2009proportional,bu2006generalized,li2018resource,aggarwal2011characterizing,margolies2016exploiting,wang2014dominant,ibrahim2010leen}, which is a non-linear metric. For fairness optimization, a controller wants to optimize a fairness objective among the different agents, e.g., proportional fairness, $\alpha$-fairness, or improve the worst-case average reward of the users \citep{altman2008generalized}. %
In such situations, the overall joint objective function cannot be written as sum utility at each time instant. This prohibits the application of standard single-agent reinforcement learning based policies as the backward induction step update cannot be directly applied here.  For example, if a process has $2$ agents and $T>1$ steps, and all the resource was allocated to the first agent till $T-1$ steps. Then, at $T^{th}$ step the resource should be allocated to the second agent to ensure fairness. This requires the need to track past allocation of all the resources and not just the current state of the system. We also note that the optimal policy cannot take a deterministic action in a state in general, and thus the optimal policy is not a deterministic policy in general. Consider a case where a scheduler needs to fairly allocate a resource between two users, and the system has only one state. A deterministic policy policy will allocate the resource to only one of the user, and hence is not optimal.
We define a novel multi-agent formulation, making several practical assumptions, which optimizes the joint function of the average per-step rewards of the different objectives to alleviate the need for maintaining history. 

SARSA and Q-Learning algorithms \citep{sutton2018reinforcement},  and their deep neural network based DQN algorithm \citep{mnih2015human}  provide  {\color{black}policies that depend only on the current state}, hence are sub-optimal. Further, these algorithms learn a Q-value function {\color{black}which can be computed based on a dynamic programming approach, which is not valid in our work.} Using evaluations on fair resource allocation and network routing problems, we verify that algorithms based on finding fixed point of Bellman equations do not perform well. {\color{black} This further motivates the need for novel RL based algorithms to optimize non-linear functions.}

We further note that even though multi-agent reinforcement learning algorithms have been widely studied, \citep{tan1993multi,shoham2003multi,bucsoniu2010multi,ono1996multi}, there are no convergence proofs to the optimal joint objective function without the knowledge of the transition probability, to the best of our knowledge. This paper assumes no knowledge of the state transition probability of the objectives and aims to provide algorithms for the decision making of the different objectives. We provide two algorithms; The first is a model-based algorithm that learns the transition probability of the next state given the current state and action. The second algorithm is model-free, which uses policy gradients to find the optimal policy.

The proposed model-based algorithm uses posterior sampling with Dirichlet distribution. We show that the proposed algorithm converges to an optimal point when the joint objective function is Lipschitz continuous. Since the optimal policy is a stochastic policy, policy search space is not-finite. We show that the problem is convex under a certain class of functions and can be efficiently solved. In setups where the joint objective function is max-min, the setup reduces to a linear programming optimization problem. In addition, we show that the proposed algorithm achieves a regret bound sub-linear in the number of time-steps and number of objectives. To obtain the regret bound, we use a Bellman error based analysis to analyze stochastic policies. The Bellman error quantifies the difference in rewards for deviating from the true MDP for one step and then following the true MDP thereafter. Using this analysis, our regret bound characterizes the gap between the optimal objective and the objective achieved by the algorithm in $T$ time-steps. We show a regret bound of {\color{black} $\Tilde{O}\left(KDS\sqrt{\frac{A}{T}}\right)$}, where $K, T$ denotes the number of objectives,  and time steps, respectively.

The  proposed model-free algorithm can be easily implemented using deep neural networks for any differentiable objective function. Further, we note that the reward functions of the different objectives can be very different, and can optimize different metrics for the objectives. As long as there is a joint objective function, the different objectives can make decisions to optimize this function and achieve the optimal decision at convergence.

The proposed framework works for any number of objectives, while is novel even for a single agent ($K=1$). In this case, the agent wishes to optimize a non-linear concave function of the average reward. Since this function is not assumed to be monotone, optimizing the function is not equivalent to optimizing the average reward. For any general non-linear concave function, regret bound is analyzed for model-based case. 

We also present evaluation results for both the algorithms for optimizing proportional fairness of multiple agents connecting to a cellular base station. We compare the obtained policies with existing asymptotically optimal algorithm {\color{black}(Blind Gradient Estimator or BGE)} of optimizing proportional fairness for wireless networks \citep{margolies2016exploiting} and SARSA based RL solution proposed by \citep{perez2009responsive}. We developed a simulation environment for wireless network for multiple number of agents and states for each agent. The proposed algorithm significantly outperform the SARSA based solution, and {\color{black} it nearly achieves the performance of the asymptotically optimal BGE algorithm.} We also considered $\alpha$-fairness for an infinite state space to show the scalability of the proposed model-free algorithm. In this case, the domain-specific algorithm was not available, while we show that we outperform Deep Q-Network (DQN) based algorithm \citep{mnih2015human}. {\color{black}Finally}, a queueing system is considered which models multiple roads merging into a single lane. The queue selection problem is modeled using the proposed framework and the proposed approach is shown to improve the fair latency reward metric among the queues significantly as compared to the DQN and the longest-queue-first policies.

Key contributions of our paper are:
\begin{itemize}[leftmargin=*]
    \item A structure for joint function optimization with multiple objectives based on average per-step rewards.
    \item {\color{black} Pareto Optimality guarantees when the joint objective is an element-wise monotone function. }
    \item A model-based algorithm using posterior sampling with Dirichlet distribution, and its regret bounds.
    \item A model-free policy gradient algorithm which can be efficiently implemented using neural networks.
    \item Evaluation results and comparison with existing heuristics for optimizing fairness in cellular networks, and queueing systems.
\end{itemize}

The rest of the paper is organized as follows. Section \ref{related_work} describes related works in the field of RL and MARL. Section \ref{model} describes the problem formulation. Pareto optimality of the proposed framework is shown in Section \ref{pareto_optimal}.  The proposed  model based algorithm and model free algorithm are described in Sections  \ref{mba} and \ref{mfa}, respectively. In  Section \ref{simulations}, the proposed algorithms are evaluated for cellular scheduling problem. Section \ref{conclusion} concludes the paper with some future work directions. %
\section{Related Work} \label{related_work}
Reinforcement learning for single agent has been extensively studied in past \citep{sutton2018reinforcement}. Dynamic Programming was used in many problems by finding cost to go at each stage \citep{puterman1994markov,bertsekas1995dynamic}. These models optimize linear additive utility and utilize the power of Backward Induction.

Following the success of Deep Q Networks \citep{mnih2015human}, many new algorithms have been developed for reinforcement learning  \citep{schulman2015trust,lillicrap2015continuous,wang2015dueling,schulman2017proximal}. These papers focus on single agent control, and provide a framework for implementing scalable algorithms. Sample efficient algorithms based on rate of convergence analysis have also been studied for model based RL algorithms \citep{agrawal2017optimistic,osband2013more}, and for model free Q learning \citep{jin2018q}. However, sample efficient algorithms use tabular implementation instead of a deep learning based implementation.

Owing to high variance in the policies obtained by standard Markov Decision Processes and Reinforcement Learning formulations, various authors worked in reducing the \textit{risk} in RL approaches \citep{garcia2015comprehensive}. Even though the risk function (e.g., Conditional Value at Risk (CVaR)) is non-linear in the rewards, this function is not only a function of long-term average rewards of the single agent {\color{black} but also a function of the higher order moments of the rewards of the single agent}. Thus, the proposed framework does not apply to the risk measures. However, for both the risk measure and general non-linear concave function of average rewards, optimal policy is non-Markovian.

Non-Markovian Decision Processes is a class of decision processes where either rewards, the next state transitions, or both do not only depends on the current state and actions but also on the history of states and actions leading towards the current state. One can augment the state space to include the history along with the current state and make the new process Markovian \citep{thiebaux2006decision}. However, this increases the memory footprint of any Q-learning algorithm. \citep{mccallum1995instance} considers only $H$ states of history to construct an approximate MDP and then use Q-learning. \citep{li2006towards} provide guarantees on Q-learning for non-MDPs where an agent observes and work according to an abstract MDP instead of the ground MDP. The states of the abstract MDP are an abstraction of the states of the ground MDP. \citep{hutter2014extreme} extend this setup to work with abstractions of histories. \citep{majeed2018q} consider a setup for History-based Decision Process (HDP). They provide convergence guarantees for Q-learning algorithm for a sub-class of HDP where for histories $h$ and $h'$, $Q(h, a) = Q(h', a)$ if the last observed state is identical for both $h$ and $h'$. They call this sub-class $Q$-value uniform Decision Process (QDP) and this subsumes the abstract MDPs. We note that our work is different from these as the $Q$-values constructed using joint objective is not independent of history.

In most applications such as financial markets, swarm robotics, wireless channel access, etc., there are multiple agents that make a decision \citep{bloembergen2015evolutionary}, and the decision of any agent {\color{black}can possibly} affect the other agents. In early work on multi-agent reinforcement learning (MARL) for stochastic games \citep{littman1994markov}, it was recognized that no agent works in a vacuum. In his seminal paper, Littman \citep{littman1994markov} focused on only two agents that had opposite and opposing goals. This means that they could use a single reward function which one tried to maximize and the other tried to minimize. The agent had to work with a competing agent and had to behave to maximize their reward in the worst possible case. In MARL, the agents select actions simultaneously at the current state and receive rewards at the next state. Different from the {\color{black}frameworks} that solve for a Nash equilibrium in a stochastic game, the goal of a reinforcement learning algorithm is to learn equilibrium strategies through interaction with the environment \citep{tan1993multi,shoham2003multi,bucsoniu2010multi,ono1996multi,shalev2016safe}.

\citep{survey_MOMDP,liu2014multiobjective,nguyen2020multi} considers the multi-objective Markov Decision Processes. Similar to our work, they consider function of expected cumulative rewards. However, they work with linear combination of the cumulative rewards whereas we consider a possibly non-linear function $f$. Further, based on the joint objective as a function of expected average rewards, we provide regret guarantees for our algorithm. For joint decision making, \citep{zhang2014fairness,zhang2015fairness} studied the problem of fairness with multiple agents and related the fairness to multi-objective MDP. They considered maximin fairness and used Linear Programming to obtain optimal policies. For general functions, linear programming based approach provided by \citep{zhang2014fairness} will not directly work. This paper also optimizes joint action of agents using a centralized controller and propose a model based algorithm to obtain optimal policies. Based on our assumptions, maximin fairness becomes a special case of our formulation and optimal policies can be obtained using the proposed model based algorithm. We also propose a model free reinforcement learning algorithm that can be used to obtain optimal policies for any general differentiable functions of average per-step rewards of individual agents. Recently, \citep{jiang2019fen} considered the problem of maximizing fairness among multiple agents. However, they do not provide a convergence analysis for their algorithm. {\color{black} We attempt to close this gap in the understanding of the problem of maximizing a concave and Lipschitz function of multiple objectives with our work.}
\section{Problem Formulation} \label{model}

We consider an infinite horizon discounted {Markov decision process (MDP) $\mathcal{M}$ defined by the tuple $\left(\mathcal{S}, \mathcal{A}, P, r^1, r^2, \cdots, r^K, \rho_0\right)$}. $\mathcal{S}$ denotes a finite set of state space of size $S$, and $\mathcal{A}$ denotes a finite set of $A$ actions. $P:\mathcal{S}\times\mathcal{A}\to[0,1]^S$ denotes the probability transition distribution. $K$ denotes the number of objectives and $[K] = \{1, 2, \cdots, K\}$ is the set of $K$ objectives. {\color{black}Let $r^k:\mathcal{S}\times\mathcal{A}\to[0,1]$ be the bounded reward function for objective $k\in[K]$}. Lastly, $\rho_0:\mathcal{S}\to[0,1]$ is the distribution of initial state. We motivate our choice of bounds on rewards from the fact that many problems in practice require explicit reward shaping. Hence, the controller or the learner is aware of the bounds on the rewards. We consider the bounds to be $[0,1]$ for our case which is easy to satisfy by reward shaping.

We use a stochastic policy $\pi : \mathcal{S} \times \mathcal{A}\to [0,1]$ which returns the probability of selecting action $a \in \mathcal{A}$ for any given state $s \in \mathcal{S}$. Following policy $\pi$ on the MDP, the agent observes a sequence of random variables $\{S_t, A_t\}_t$ where $S_t$ denotes the state of the agent at time $t$ and $A_t$ denotes the action taken by the agent at time $t$. The expected discounted long term reward and expected per step reward of the objective $k$ are given by $J_{\pi}^{P,k}$ and $\lambda_{\pi}^{P,k}$, respectively, when the joint policy $\pi$ is followed. Formally, for discount factor $\gamma\in(0,1)$, $J_{\pi}^{P,k}$ and $\lambda_{\pi}^{P,k}$ are defined as
\begin{align}
    J_{\pi}^{P,k} &= \mathbb{E}_{S_0, A_0, S_1, A_1,\cdots}\left[\lim_{\tau\to\infty}\sum_{t=0}^{\tau}\gamma^t r^k(S_t,A_t)\right]\\
    S_0&\sim \rho_0(S_0),\ A_t\sim \pi(A_t|S_t),\ S_{t+1}\sim P(\cdot|S_t, A_t)\nonumber\\
    \lambda_{\pi}^{P,k} &= \mathbb{E}_{S_0, A_0, S_1, A_1,\cdots}\left[\lim_{\tau\to\infty}\frac{1}{\tau}\sum_{t=0}^{\tau} r^k\left(S_t,A_t\right)\right]\\
    &= \lim_{\gamma\to1}(1-\gamma)J_{\pi}^{P,k}\label{eq:lambda_from_J}
\end{align}
Equation \eqref{eq:lambda_from_J} follows from the Laurent series expansion of $J_\pi^k$ \citep{puterman1994markov}. For brevity, in the rest of the paper $\mathbb{E}_{S_t, A_t, S_{t+1}; t\geq 0}[\cdot]$ will be denoted as $\mathbb{E}_{\rho, \pi, P}[\cdot]$, where $S_0\sim \rho_0,\ A_t\sim \pi(\cdot|S_t),\ S_{t+1}\sim P(\cdot|S_t, A_t)$. {\color{black}The expected per step reward satisfies the following Bellman equation
\begin{align}
    h_{\pi}^{P,k}(s) + \lambda_{\pi}^{P,k} = \mathbb{E}_{a\sim\pi}\left[r^k(s, a)\right] + \mathbb{E}_{a\sim\pi}\left[\sum_{s'}P(s'|s,a)h_{\pi}^{P,k}(s')\right]
\end{align}
where $h_\pi^k(s)$ is the bias of policy $\pi$ for state $s$. We also define the discounted value function $V_{\gamma,\pi}^{P, k}(s)$ and Q-value functions $Q_{\gamma,\pi}^{P, k}(s,a)$ as follows:
\begin{align}
    V_{\gamma, \pi}^{P, k}(s) &= \mathbb{E}_{A_0, S_1, A_1,\cdots}\left[\lim_{\tau\to\infty}\sum_{t=0}^{\tau}\gamma^t r^k(S_t,A_t)|S_0 = s\right]\\
    Q_{\gamma, \pi}^{P, k}(s,a) &= \mathbb{E}_{S_1, A_1,\cdots}\left[\lim_{\tau\to\infty}\sum_{t=0}^{\tau}\gamma^t r^k(S_t,A_t)|S_0 = s, A_0= a\right]\nonumber\\
    &= r^k(s, a) + \mathbb{E}_{a\sim\pi}[\sum_{s'}P(s'|s,a)V_{\gamma, \pi}^{P,k}(s')]
\end{align}
Further, the bias $h_{\pi}^{P,k}(s)$ and the value function $V_{\gamma,\pi}^{P,k}$ are related as
\begin{align}
h_{\pi}^{P,k}(s_1) - h_{\pi}^{P,k}(s_2) = \lim_{\gamma \to 1}\left(V_{\gamma, \pi}^{P,k}(s_1) - V_{\gamma,\pi}^{P,k}(s_2)\right)\text{, where } s_1, s_2\in \mathcal{S}.
\end{align}
For notation simplicity, we may drop the superscript $P$ when discussing about variables for the true MDP.
}

Note that each policy induces a Markov Chain on the states $\mathcal{S}$ with transition probabilities $P_{\pi, s}(s') = \sum_{a\in\mathcal{A}}\pi(a|s)P(s'|s,a)$. After defining a policy, we can now define the diameter of the MDP $\mathcal{M}$ as:
\begin{definition}[Diameter]
Consider the Markov Chain induced by the policy $\pi$ on the MDP $\mathcal{M}$. Let $T(s'|\mathcal{M}, \pi, s)$ be a random variable that denotes the first time step when this Markov Chain enters state $s'$ starting from state $s$. Then, the diameter of the MDP $\mathcal{M}$ is defined as:
\begin{align}
    D(\mathcal{M}) = \max_{\pi}\max_{s'\neq s}\mathbb{E}\left[T(s'|\mathcal{M}, \pi, s)\right]
\end{align}
\end{definition}

Further, starting from an arbitrary initial state distribution, the state distribution may take a while to converge to the steady state distribution. For any policy $\pi$, let $P_{\pi,s}^t = \left(P_{\pi, s}\right)^t$ be the $t$-step probability distribution of the states when policy $\pi$ is applied to MDP with transition probabilities $P$ starting from state $s$.

We are now ready to state our first assumption on the MDP. We assume that the Markov Decision Process is ergodic. This implies that: \textbf{1.} for any  policy all states, $s\in\mathcal{S}$, communicate with each other;  \textbf{2.} for any policy, the process converges to the steady state distribution exponentially fast. Formally, we have

\begin{assumption}\label{bounded_mixing_time_assumtion}
The Markov Decision Process, $\mathcal{M}$, is ergodic. Then, we have, 

1. The diameter, $D$, of $\mathcal{M}$ is finite. 

2. For any policy $\pi$, for some $C> 0$ and $0\le\rho<1$, we have
\begin{align}
    \|P_{\pi,s}^t - d_{\pi}\|_{TV} \le C\rho^t
\end{align}
where $d_\pi$ is the steady state distribution induced by policy $\pi$ on the MDP.
\end{assumption}

\color{black}

{The agent aim to collaboratively optimize the function $f:\mathbb{R}^K\to\mathbb{R}$, which is defined over the long-term rewards of the individual objectives.} We make certain practical assumptions on this joint objective function $f$, which are listed as follows:

\begin{assumption}\label{concave_assumption}
The objective function $f$ is jointly concave. Hence for any arbitrary distribution $\mathcal{D}$, the following holds.
\begin{align}
    f\left(\mathbb{E}_{\mathbf{x}\sim\mathcal{D}}\left[\mathbf{x}\right]\right) \geq \mathbb{E}_{\mathbf{x}\sim\mathcal{D}}\left[f\left(\mathbf{x}\right)\right];\ \mathbf{x}\in\mathbb{R}^K\label{eq:concave_utility}
\end{align}
\end{assumption}
The objective function $f$ represents the utility obtained from the expected per step reward for each objective. These utility functions are often concave to reduce variance for a risk averse decision maker \citep{10.2307/1913738}. To model this concave utility function, we assume the above form of Jensen's inequality. 

\begin{assumption}\label{lipschitz_assumption}
The function $f$ is assumed to be a $L-$ Lipschitz function, or
    \begin{align}
        &\left|f\left(\mathbf{x}\right) - f\left(\mathbf{y}\right)\right| \leq L\left\lVert \mathbf{x} - \mathbf{y}\right\rVert_1;\ \mathbf{x}, \mathbf{y}\in\mathbb{R}^K \label{eq:Lipschitz}
    \end{align}
\end{assumption}
Assumption \ref{lipschitz_assumption} ensures that for a small change in long run rewards for any objective does not cause unbounded changes in the objective.

Based on Assumption \ref{concave_assumption}, we maximize the function of expected sum of rewards for each objective. Further to keep the formulation independent of time horizon or $\gamma$, we maximize the function over expected per-step rewards of each objective. Hence, our goal is to find the optimal policy as the solution for the following optimization problem.
\begin{align}
\pi^* = &\arg\max_{\pi}f(\lambda^1_{\pi}, \cdots, \lambda^K_{\pi}) \label{eq:optimal_policy_def}
\end{align}
If $f(\cdot)$ is also monotone, we note that the optimal policy in \eqref{eq:optimal_policy_def} can be shown to be Pareto optimal. The detailed proof will be presented later in Section \ref{pareto_optimal}.

Any online algorithm $\mathscr{A}$ starting with no prior knowledge will require to obtain estimates of transition probabilities $P$ and obtain rewards $r_k, \forall\ k\in[K]$ for each state action pair. Initially, when algorithm $\mathscr{A}$ does not have good estimates of the model, it accumulates a regret for not working as per optimal policy.  We define a time dependent regret $R_T$ to achieve an optimal solution defined as the difference between the optimal value of the function and the value of the function at time $T$, or
{
\begin{align}
    R_T& = \mathbb{E}_{S_t, A_t}\Bigg[\Big|f\left(\lambda^1_{\pi^*}, \cdots, \lambda^K_{\pi^*}\right) - f\left(\frac{1}{T}\sum_{t=0}^Tr^1(S_t, A_t), \cdots, \frac{1}{T}\sum_{t=0}^Tr^K(S_t, A_t)\right)\Big|\Bigg] \label{eq:regret_defined}
\end{align}
The regret defined in Equation \eqref{eq:regret_defined} is the expected deviation between the value of the function obtained from the expected rewards of the optimal policy and the value of the function obtained from the observed rewards from a trajectory. Following the work of \citep{survey_MOMDP}, we note that the outer expectation comes for running the decision process for a different set of users or running a separate and independent instance for the same set of users. Since the realization can be different from the expected rewards, the function values can still be different even when following the optimal policy.}%

We note that we do not require $f(\cdot)$ to be monotone. Thus, even for a single objective, optimizing {\color{black}$f(\mathbb{E}[\sum_{t}r_t])$} is not equivalent to optimizing $\mathbb{E}[\sum_{t}r_t]$. Hence, the proposed framework can be used {to} optimize functions of cumulative or long term average reward for single objective as well.

In the following sections, we first show that the joint-objective function of average rewards allows us to obtain Pareto-optimal policies with an additional assumption of monotonicity. Then, we will present a model-based algorithm to obtain this policy $\pi^*$, and regret accumulated by the algorithm.  We will present a model-free algorithm in Section  \ref{mfa} which can be efficiently implemented using Deep Neural Networks.
\section{Obtaining Pareto-Optimal Policies}\label{pareto_optimal}

Many multi-objective or multi-agent formulations require the policies to be Pareto-Optimal \citep{survey_MOMDP,sener2018multi,van2014multi}. The conflicting rewards of various agents may not allow us to attain simultaneous optimal average rewards for any agent with any joint policy. Hence, an envy-free Pareto optimal policy is desired.
We now provide an additional assumption on the joint objective function, and show that the optimal policy satisfying Equation \eqref{eq:optimal_policy_def} is Pareto optimal.

\begin{assumption}
If $f$ is an element-wise monotonically strictly increasing function. Or, $\forall\ k\in[K]$, the function satisfies,
    \begin{align}
        x^k > y^k \implies f\left(x^1,\cdots,x^{k},\cdots,x^K\right) > f\left(x^1,\cdots,y^{k},\cdots,x^K\right) \label{eq:monotone_prop}
    \end{align}
\end{assumption}

Element wise increasing property motivates the agents to be strategic as by increasing its per-step average reward, agent can increase the joint objective. Based on Equation \eqref{eq:monotone_prop}, we notice that the solution for Equation \eqref{eq:optimal_policy_def} is Pareto optimal.
\begin{definition}
A policy $\pi^*$ is said to be \textbf{Pareto optimal} if and only if there is exists no other policy $\pi$ such that the average per-step reward is at least as high for all agents, and strictly higher for at least one agent. In other words, 
\begin{align}
    \forall\ k\in[K],\ \lambda_{\pi^*}^k \geq \lambda_{\pi}^k\ and\ \exists\  k,\ \lambda_{\pi^*}^k > \lambda_{\pi}^k \label{eq:Pareto_optimal}
\end{align}
\end{definition}

\begin{theorem}
Solution of Equation \eqref{eq:optimal_policy_def}, or the optimal policy $\pi^*$ is Pareto Optimal.
\end{theorem}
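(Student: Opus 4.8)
The plan is to argue by contradiction, leveraging the strict element-wise monotonicity of $f$ expressed in Equation \eqref{eq:monotone_prop}. Suppose that the maximizer $\pi^*$ of Equation \eqref{eq:optimal_policy_def} is \emph{not} Pareto optimal. Then, by the definition of Pareto optimality, there exists another policy $\pi$ whose per-step reward vector dominates that of $\pi^*$: that is, $\lambda_\pi^k \geq \lambda_{\pi^*}^k$ for every agent $k \in [K]$, with a strict inequality $\lambda_\pi^{k_0} > \lambda_{\pi^*}^{k_0}$ for at least one index $k_0$. The goal is to show that such a dominating $\pi$ cannot exist.

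The central step is to show that this domination forces $f(\lambda_\pi^1, \cdots, \lambda_\pi^K) > f(\lambda_{\pi^*}^1, \cdots, \lambda_{\pi^*}^K)$. The subtlety is that Equation \eqref{eq:monotone_prop} asserts strict monotonicity only when a \emph{single} coordinate is varied, whereas the dominating vector may differ from $\lambda_{\pi^*}$ in several coordinates simultaneously. First I would decompose the transition from $\lambda_{\pi^*}$ to $\lambda_\pi$ into a finite chain of single-coordinate changes, adjusting coordinate $1$, then coordinate $2$, and so on through coordinate $K$. For each $k$ with $\lambda_\pi^k = \lambda_{\pi^*}^k$ the value of $f$ is unchanged, and for the coordinate $k_0$ the strict-monotonicity assumption gives a strict increase in $f$. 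The key observation is that strict monotonicity in each argument implies weak monotonicity there — a coordinate that only weakly increases cannot decrease $f$ — so none of the intermediate single-coordinate steps lowers $f$, while the step at $k_0$ strictly raises it. Composing this chain of inequalities yields $f(\lambda_\pi) > f(\lambda_{\pi^*})$.

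This strict inequality contradicts the fact that $\pi^*$ attains the maximum of $f$ over all policies, by the defining property in Equation \eqref{eq:optimal_policy_def}. Hence no dominating policy $\pi$ can exist, and $\pi^*$ is Pareto optimal as claimed. I expect the main obstacle to be exactly the coordinate-by-coordinate bookkeeping: because the monotonicity hypothesis is stated one coordinate at a time, one must justify carefully that the weakly-increasing coordinates cannot undo the strict gain produced at $k_0$, which is precisely where the implication ``strict monotonicity $\Rightarrow$ non-decreasing in each argument'' is essential. The argument does not require $\pi^*$ to be the unique maximizer; it shows that \emph{any} maximizer of $f$ is Pareto optimal.
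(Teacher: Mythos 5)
Your proposal is correct and takes essentially the same route as the paper: a proof by contradiction in which a Pareto-dominating policy $\pi$ would, by monotonicity of $f$, yield $f(\lambda_\pi^1,\cdots,\lambda_\pi^K) > f(\lambda_{\pi^*}^1,\cdots,\lambda_{\pi^*}^K)$, contradicting the optimality of $\pi^*$ in Equation \eqref{eq:optimal_policy_def}. Your coordinate-by-coordinate chain is in fact more careful than the paper's one-line application of Equation \eqref{eq:monotone_prop}, which silently applies the single-coordinate strict-monotonicity assumption to vectors differing in several coordinates at once; your decomposition, together with the observation that strict monotonicity in each argument implies non-decrease under weak coordinate increases, supplies precisely the justification that step needs.
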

\begin{proof}
We will prove the result using contradiction. Let $\pi^*$ be the solution of Equation \eqref{eq:optimal_policy_def} and not be Pareto optimal. Then there exists some policy $\pi$ for which the following equation holds,
\begin{align}
    \forall\ k\in[K],\ \lambda_{\pi}^k \geq \lambda_{\pi^*}^k\ and\ \exists\  k,\ \lambda_{\pi}^k > \lambda_{\pi^*}^k \label{eq:non_pareto_optimal}
\end{align}
From element-wise monotone increasing property in Equation \eqref{eq:monotone_prop}, we obtain
\begin{align}
    f(\lambda_{\pi^*}^1,\cdots, \lambda_{\pi}^k, \cdots,\lambda_{\pi^*}^K) &> f(\lambda_{\pi^*}^1,\cdots, \lambda_{\pi^*}^k, \cdots,\lambda_{\pi^*}^K)\\
            &= \arg\max_{\pi'}f(\lambda^1_{\pi'}, \cdots, \lambda^K_{\pi'})
\end{align}
This is a contradiction. Hence, $\pi^*$ is a Pareto optimal solution.
\end{proof}

This result shows that algorithms presented in this paper can be used to optimally allocate resources among multiple agents using average per step allocations.
\section{Model-based Algorithm} \label{mba}

RL problems typically optimize the cumulative rewards, which is a linear function of rewards at each time step because of the addition operation. This allows the Bellman Optimality Equation to require only the knowledge of the current state to select the best action to optimize future rewards \citep{puterman1994markov}. 
However, since our controller is optimizing a joint non-linear function of the long-term rewards from multiple sources, Bellman Optimality Equations cannot be written as a function of the current state exclusively. Our goal is to find the optimal policy as solution of Equation \eqref{eq:optimal_policy_def}. Using average per-step reward and infinite horizon allows us to use Markov policies. An intuition into why this works is there is always infinite time available to optimize the joint objective $f$.
\color{black}

The individual long-term {average}-reward for each agent is still linearly additive ($\frac{1}{\tau}\sum_{t=0}^\tau r^k(S_t, A_t)$). %
For infinite horizon optimization problems (or $\tau\to\infty$), we can use steady state distribution of the state to obtain expected cumulative rewards. For all $k\in[K]$, we use
    \begin{align}
        \lambda_{\pi}^k = \sum_{s\in \mathcal{S}}\sum_{a\in \mathcal{A}}r^k(s, a)d_{\pi}(s,a) \label{eq:average_reward_from_ss_dist}
    \end{align}
where $d_{\pi}(s,a)$ is the steady state joint distribution of the state and actions under policy $\pi$. Equation \eqref{eq:average_reward_from_ss_dist} suggests that we can transform the optimization problem in terms of optimal policy to optimal steady-state distribution. Thus, we have the joint optimization problem in the following form which uses steady state distributions
    \begin{align}
        d^* = \arg\max_{d} f\left(\sum_{s\in \mathcal{S}, a\in \mathcal{A}}r^1(s, a)d(s,a), \cdots, \sum_{s\in \mathcal{S}, a\in \mathcal{A}}r^K(s, a)d(s,a)\right) \label{eq:optimization_equation}
    \end{align}
with the following set of constraints,
\begin{align}
    \sum_{a\in\mathcal{A}}d(s',a) &= \sum_{s\in\mathcal{S}, a\in\mathcal{A}}P(s'|s, a)d(s, a)\ \forall\ s'\in\mathcal{S}\label{eq:transition_constraint}\\
    \sum_{s\in\mathcal{S}, a\in\mathcal{A}} d(s, a) &= 1\label{eq:total_prob_constraint}\\
    d(s, a) &\geq 0 \ \forall\ s\in\mathcal{S}, a\in\mathcal{A}\label{eq:non_neg_prob_constrant}
\end{align}
\color{black}
Constraint (\ref{eq:transition_constraint}) denotes the transition structure for the underlying Markov Process. Constraint (\ref{eq:total_prob_constraint}), and constraint (\ref{eq:non_neg_prob_constrant}) ensures that the solution is a valid probability distribution. Since $f(\cdots)$ is jointly concave, arguments in Equation (\ref{eq:optimization_equation}) are linear, and the constraints in Equation (\ref{eq:transition_constraint},\ref{eq:total_prob_constraint},\ref{eq:non_neg_prob_constrant}) are linear, this is a convex optimization problem. Since convex optimization problems can be solved in polynomial time \citep{bubeck2015convex}, we can use standard approaches to solve Equation (\ref{eq:optimization_equation}). After solving the optimization problem, we find the optimal policy from the obtained steady state distribution $d^*(s,a)$ as,
\begin{align}
    \pi^*(a|s) = \frac{Pr(a, s)}{Pr(s)} = \frac{d^*(a, s)}{\sum_{a\in\mathcal{A}}d^*(s, a)}\label{eq:optimal_policy}
\end{align}

The proposed model-based algorithm estimates the transition probabilities by interacting with the environment. We need the steady state distribution $d_\pi$ to exist for any policy $\pi$. { We note that when the priors of the transition probabilities $P(\cdot|s,a)$ are a Dirichlet distribution for each state and action pair, such a steady state distribution exists}. Proposition \ref{steady_state_proposition} formalizes the result of the existence of a steady state distribution when the transition probability is sampled from a Dirichlet distribution.
\begin{proposition}\label{steady_state_proposition}
For MDP $\mathcal{M}$ with state space $\mathcal{S}$ and action space $\mathcal{A}$, let the transition probabilities ${P}$ come from a Dirichlet distribution. Then, any policy $\pi$ for $\mathcal{M}$ will have a steady state distribution $\hat{d}_\pi$ given as
    \begin{align}
        \hat{d}_\pi(s') = \sum_{s\in\mathcal{S}}\hat{d}_\pi(s)\left(\sum_{a\in\widehat{\mathcal{A}}} \pi(a|s)P(s'|s, a)\right) \forall s'\in\mathcal{S}.
    \end{align}
\end{proposition}
\begin{proof}
The transition probabilities $P(s'|s, a)$ follow Dirichlet distribution, and hence they are strictly positive. Further, as the policy $\pi(a|s)$ is a probability distribution on actions conditioned on state, $\pi(a|s) \geq 0,\ \sum_a \pi(a|s) = 1$. So, there is a non zero transition probability to reach from state $s\in{\mathcal{S}}$ to state $s'\in{\mathcal{S}}$.

Now, note that all the entries of the transition probability matrix are strictly positive. And, hence the Markov Chain induced over the MDP $\mathcal{M}$ by any policy $\pi$ is 1) irreducible, as it is possible to reach any state from any other state, and 2) aperiodic, as it is possible to reach any state in a single time step from any other state. Together, we get the existence of the steady-state distribution \citep{lawler2018introduction}.
\end{proof}

To complete the setup for our algorithm, we make few more assumptions stated below.

\begin{assumption}\label{dirichlet_prior_assumption}
The transition probabilities $P(\cdot|s,a)$ of the Markov Decision Process have a Dirichlet prior for all state action pairs $(s,a)$.
\end{assumption}

Since we assume that transition probabilities of the MDP $\mathcal{M}$ follow Dirichlet distributions, all policies on $\mathcal{M}$ have a steady-state distribution.

\begin{algorithm}[thbp]
	\caption{Model-Based Joint Decision Making Algorithm} \label{alg:model_based_algo}
    \begin{algorithmic}[1]
        \Procedure{Model Based Online}{$\mathcal{S}, \mathcal{A}, [K], f, r$}
        \State Initialize $N(s, a, s') = 1~~\forall~~(s, a, s') \in \mathcal{S}\times\mathcal{A}\times\mathcal{S}$
        \State Initialize $\nu_0(s, a) = 0~~\forall~~(s, a) \in \mathcal{S}\times\mathcal{A}$
        \State Initialize $\pi_1(a|s) = \frac{1}{|\mathcal{A}|}\ \forall\ (a, s)\in\mathcal{A}\times\mathcal{S}$
        \State Initialize $e = 1$
        
        \For{time steps $t = 1, 2, \cdots$}
            \State Observe current state $S_t$
            \State Sample action to play $A_t \sim \pi_e(\cdot|S_t)$
            \State Play $A_t$, obtain reward $r_t\in[0,1]^K$ and observe next state $S_{t+1}$
            \State Update $N(S_t, A_t, S_{t+1}) \leftarrow N(S_t, A_t, S_{t+1})+  1$, $\nu_e(S_t, A_t) \leftarrow \nu_e(S_t, A_t)+ 1$
            \If{$\nu_e(S_t, A_t) \geq \max\{1, \sum_{e'< e}\nu_{e'}(S_t, A_t)\}$}
                \State Update epoch counter $e \leftarrow e + 1$
                \State Initialize $\nu_e(s, a) = 0~~\forall~~(s, a) \in \mathcal{S}\times\mathcal{A}$
    	        \State $P_e(s'|s, a) \sim Dir(N(s, a,\cdot))\ \forall\ (s, a)$
    	        \State Solve steady state distribution $d(s, a)$ as the  solution of the optimization problem in  Equations (\ref{eq:optimization_equation}-\ref{eq:non_neg_prob_constrant})
    	        \State Obtain optimal policy $\pi_e$ as 
    	        $$\pi_e(a|s) = \frac{d(s, a)}{\sum_{a\in\mathcal{A}}d(s, a)}$$                
            \EndIf
        \EndFor
        \EndProcedure
    \end{algorithmic}
\end{algorithm}

\subsection{Algorithm Description}
Algorithm \ref{alg:model_based_algo} describes the overall procedure that estimates the transition probabilities and the reward functions. The algorithm takes as input the state space $\mathcal{S}$, action space $\mathcal{A}$, set of agents $[K]$, the reward structure $r$, and the objective function $f$. It initializes the next state visit count for each state-action pair $N(s, a, s')$ by one for Dirichlet sampling. For initial exploration, the policy uses a uniform distribution over all the actions. The algorithm proceeds in epochs $e$ with $t_e$ denoting the time step $t$ at which epoch $e$ starts. Also, $\nu_e(s,a)$ stores the number of times a state-action pair is visited in epoch $e$. We assume that the controller is optimizing for infinite horizon and thus there is no stopping condition for epoch loop in Line 6. For each time index $t\in[t_e, t_{e+1})$ in epoch $e$, the controller observes the state, samples and plays the action according to $\pi_e(\cdot|s)$, and observes the rewards for each agent and next state. It then updates the state visit count for the observed state and played action pair. We break the current epoch $e$ if the total number of visitations of any state action pair in current epoch exceeds total visitations in previous epoch. This epoch breaking strategy ensures that each state-action triggers epoch switches exponentially slower. This increasing epoch duration bounds the policy switching cost \citep{jaksch2010near}. After breaking the epoch, we sample transition probabilities from the updated posterior and find a new optimal policy for the sampled MDP by solving the optimization framework described in Equation(\ref{eq:optimization_equation}).

\subsection{Regret} \label{regret}
We now prove the regret bounds of Algorithm \ref{alg:model_based_algo} in the form of the following theorem. We first give the high level ideas used in obtaining the bounds on regret. The Lipschitz property of the objective function $f$ allows to write the regret as sum of regrets for individual per-step average reward $\lambda_\pi^k$. Now, we can divide the regret into regret incurred in each epoch $e$. Then, we use the posterior sampling lemma (Lemma 1 from \citep{osband2013more}) to obtain the equivalence between the value of the function $f$ for the optimal policy of the true MDP $\mathcal{M}$ and the value of the function for the optimal value of the sampled MDP $\widehat{\mathcal{M}}$ conditioned on the observed state evolution. Now, for any epoch $e$, note that for each reward $k$, the agent plays policy $\pi_e$ optimized for the sampled MDP $\widehat{\mathcal{M}}$ on the true MDP $\mathcal{M}$. We break the difference between the two terms, the per-step average reward $\lambda_{\pi_e}^{P_e,k}$ of the policy $\pi_e$ for the sampled MDP with transition probability $P_e$ and the rewards obtained into two terms. The first term is the difference $\lambda_{\pi_e}^{P_e,k}$ and $\lambda_{\pi_e}^{P,k}$. We will denote our immediate discussion on bounding this term. The second deviation is the difference between $\lambda_{\pi_e}^{P,k}$ and the observed rewards $r^k_t$ for $t\in[t_e, t_{e+1})$, which we can bound using Azuma-Hoeffding's inequality.

To compute the regret incurred by the optimal policy $\pi_e$ for the sampled MDP on the true MDP $\mathcal{M}$, we use Bellman error. For some policy $\pi$, we define Bellman error $B_\pi^{\Tilde{P},k}(s,a)$ for the infinite horizon MDPs as the difference between the cumulative expected rewards obtained for deviating from the system model with transition $\Tilde{P}$ for one step by taking action $a$ in state $s$ and then following policy $\pi$. We have:
\begin{align}
    B_{\gamma,\pi}^{\Tilde{P},k}(s,a) &= \Big(Q_{\gamma, \pi}^{\Tilde{P},k}(s,a) - r(s,a) -  \gamma\sum\nolimits_{s'\in\mathcal{S}}P(s'|s,a)V_{\gamma, \pi}^{\Tilde{P}, k}(s,a)\Big)\nonumber\\
    B_\pi^{\Tilde{P},k}(s,a)&= \lim_{\gamma\to 1}B_{\gamma,\pi}^{\Tilde{P},k}(s,a) \label{eq:Bellman_error_definition}
\end{align}

We relate the Bellman error defined in Equation \eqref{eq:Bellman_error_definition} to the gap between the expected per step reward $\lambda_{\pi}^{\Tilde{P},k}$ for running policy $\pi$ on an MDP with transition probability $\Tilde{P}$ and the expected per step reward $\lambda_{\pi}^{P,k}$ for running policy $\pi$ on the true MDP in the following lemma:

\begin{lemma}\label{lem:bound_average_by_bellman}
The difference of long-term average rewards for running the policy $\pi_e$ on the MDP, $\lambda_{\pi}^{\Tilde{P},k}$, and the average long-term average rewards for running the policy $\pi$ on the true MDP, $\lambda_{\pi}^{\Tilde{P},k}$, is the long-term average Bellman error as
\begin{align}
    \lambda_{\pi}^{\Tilde{P},k} - \lambda_{\pi}^{P,k} = \sum_{s,a}d_{\pi}(s,a) B_{\pi}^{\Tilde{P},k}(s,a) = \mathbb{E}_{(s,a)\sim\pi, P}\left[B_{\pi}^{\Tilde{P},k}(s,a)\right]
\end{align}
where $d_{\pi}$ is the occupancy measure generated by policy $\pi$ on the true MDP and $\mathbb{E}_{(s,a)\sim\pi, P}[\cdot]$ denotes expectation over $s,a$ sampled when running policy $\pi$ on MDP with transition probability $P$.
\end{lemma}
\begin{proof}[Proof Sketch]
We start by writing $Q_{\gamma, \pi}^{\Tilde{P}, k}$ in terms of the Bellman error. Now, subtracting $V_{\gamma,\pi}^{P, k}$ from $V_{\gamma,\pi}^{\Tilde{P},k}$ and using the fact that $\lambda_{\pi_e}^{P,k} = \lim_{\gamma\to1}(1-\gamma)V_{\gamma,\pi}^{P,k}$ and $\lambda_{\pi_e}^{\Tilde{P},k} = \lim_{\gamma\to1}(1-\gamma)V_{\gamma,\pi_e}^{\Tilde{P},k}$ we obtain the required result.
A complete proof is provided in Appendix \ref{app:average_error_is_bellman_error}.
\end{proof}

After relating the gap between the long-term average rewards of policy $\pi_e$ on the two MDPs, we now want to bound the sum of Bellman error over an epoch. For this, we first bound the Bellman error for a particular state action pair $s,a$ in the form of following lemma. We have,
\begin{lemma}\label{lem:bound_bellman_s_a_main}
For an MDP with rewards $r^k(s,a)$ and transition probability $\Tilde{P}(s'|s,a)$ such that $\|\Tilde{P}(\cdot|s,a) - P(\cdot|s,a)\|_1 \le \epsilon_{s,a}$, the Bellman error $B_{\pi_e}^{\Tilde{P},k}(s,a)$ for state-action pair $s,a$ is upper bounded as
\begin{align}
B_{\pi}^{\Tilde{P},k}(s,a) \le \Big\|\Tilde{P}(\cdot|s,a) - P(\cdot|s,a)\Big\|_1\|h_{\pi}^{\Tilde{P},k}(\cdot)\|_\infty\le\Tilde{D}\min\left\{\epsilon_{s,a}, 2\right\},
\end{align}
where $\Tilde{D}$ is the diameter of the MDP with transition probability $\Tilde{P}$.
\end{lemma}
\begin{proof}[Proof Sketch]
We start by noting that the Bellman error essentially bounds the impact of the difference in value obtained because of the difference in transition probability to the immediate next state.
A complete proof is provided in Appendix \ref{app:bounding_bellman_error}.
\end{proof}

A major part of the regret analysis is how well the learned model estimates the true system model. For that, bound the deviation of the estimates of the estimated transition probabilities of the Markov Decision Processes $\mathcal{M}$. For that we use $\ell_1$ deviation bounds from \citep{weissman2003inequalities}. Consider, the following event,
\begin{align}
    \mathcal{E}_t = \left\{\|\Bar{P}_t(\cdot|s, a) - P(\cdot|s, a)\|_1 \leq \sqrt{\frac{14S\log(2AT)}{\max\{1, n_t(s,a)\}}}\forall (s,a)\in\mathcal{S}\times\mathcal{A}\right\}
\end{align}
where $n_t(s,a)=\sum_{t' = 1}^t {\bf 1}_{\{S_{t'} = s, a_{t'}= a\}}$ and $\Bar{P}(s'|s,a) = \left(\sum_{t' = 1}^t {\bf 1}_{\{S_{t'} = s, a_{t'}= a, S_{t' + 1} = s'\}}\right)/n_t(s,a)$ of the empirical estimate of the transition probabilities. Then we have, the following lemma:

\begin{lemma}\label{lem:deviation_of_probability_estimates}
The probability that the event $\mathcal{E}_t$ fails to occur us upper bounded by $\frac{1}{20t^6}$.
\end{lemma}

We also want to ensure that the behavior of the sampled MDP from the posterior distribution is identical to the behavior of the true MDP conditioned on the observed state, action evolution. For this, we use the following lemma.

\begin{lemma}[Posterior Sampling Lemma 1 \citep{osband2013more}]\label{lem:posteriorsampling}
For any $\sigma(H_t)$-measurable function $g$, if $P$ follows distribution $\phi$, then for transition probabilities $P_e$ sampled from $\phi(\cdot|H_t)$ we have,
\begin{align}
    \mathbb{E}\left[g(P)|\sigma(H_t)\right] = \mathbb{E}\left[g(P_e)|\sigma(H_t)\right] \label{eq:equal_exp_ps}
\end{align}
\end{lemma}

Lastly, we have the following lemma to bound the number of policy switches to ensure that after every policy switch, the stochastic process of the state evolution gets sufficient steps to converge towards stationary distribution.
\begin{lemma}\label{lem:bound_on_episodes}[\citep{jaksch2010near}[Proposition 18]]
The total number of epochs $E$ for the Algorithm \ref{alg:model_based_algo} with $\nu_e(s,a) \le \sum_{e'}^{e-1}\max\{1, \nu_{e'}(s,a)\}$ for any $s,a$, up to step $T \geq SA$ is upper bounded as
\begin{align}
    E \leq 1 + 2AS + AS \log_2\left(\frac{T}{SA}\right)
\end{align}
\end{lemma}

Using Lemma \ref{lem:bound_average_by_bellman}, Lemma \ref{lem:bound_bellman_s_a_main}, Lemma \ref{lem:posteriorsampling} and Lemma  \ref{lem:deviation_of_probability_estimates}, we can now bound the regret of Algorithm \ref{alg:model_based_algo} in the form of following theorem.

\begin{theorem}
The regret $R_T$ of Algorithm \ref{alg:model_based_algo} for MDP with Dirichlet priors and diameter $D$ is bounded.
\begin{equation}
     R_T \leq \Tilde{O}\left(LKDS\sqrt{\frac{A}{T}} + \frac{1}{T}\frac{SACD}{1-\rho}\right)
\end{equation}
\end{theorem}
\begin{proof}We use the Lipschitz continuity of the function to break the scalarized objective into long-term average reward regrets of individual objecectives. Using Lipschitz continuity, the total regret becomes the sum of individual regrets.
\begin{align}
    \mathbb{E}\left[R_T\right]&=\mathbb{E}\left[\Big|f\left(\cdots, \lambda_{\pi^*}^{P,k}, \cdots\right) - f\left(\cdots, \frac{1}{T}\sum_{t=0}^Tr^k(S_t, A_t), \cdots\right)\Big|\right]\\
    &= \mathbb{E}\left[ \frac{L}{T}\sum_{k=1}^K\Big|T\lambda_{\pi^*}^{P,k} - \sum_{t=0}^Tr^k(S_t, A_t)\Big|\right] \leq  \frac{LK}{T} \max_{k\in[K]}\mathbb{E}\left[\Big|T\lambda_{\pi^*}^{P,k} - \sum_{t=0}^Tr^k(S_t, A_t)\Big|\right].
\end{align}

We can divide regret for any objective $k\in[K]$ over $T$ time steps into regret accumulated over episodes as:

\begin{align}
    \mathbb{E}\left[|T\lambda_{\pi^*}^{P,k} - \sum_{t=1}^Tr^k(S_t, A_t)|\right] &= \mathbb{E}\left[\left|\sum_{e=1}^E\sum_{t=t_e}^{t_{e+1}-1}\left(\lambda_{\pi^*}^{P,k} - \sum_{t=1}^Tr^k(S_t, A_t)\right)\right|\right]
\end{align}

Now, we note that the regret in each episode is conditioned on filtration $H_{t_e}$, and is $\sigma(H_{t_e})$-measurable. Hence, we can use \citep[{Posterior Sampling Lemma}]{osband2013more} and \cite[{Regret Equivalence Theorem}]{osband2013more} to obtain the equivalence between the per-step average reward of the optimal policy for the true MDP and the per-step average reward of the optimal policy of the sampled MDP. We have:

\begin{align}
    \mathbb{E}\left[\left|\sum_{e=1}^E\sum_{t=t_e}^{t_{e+1}-1}\left(\lambda_{\pi^*}^{P,k} - \sum_{t=1}^Tr^k(S_t, A_t)\right)\right|\right] &\le \mathbb{E}\left[\sum_{e=1}^E\left|\sum_{t=t_e}^{t_{e+1}-1}\left(\lambda_{\pi^*}^{P,k} - \sum_{t=1}^Tr^k(S_t, A_t)\right)\right|\right]\\
    &= \mathbb{E}\left[\sum_{e=1}^E\mathbb{E}\left[\left|\sum_{t=t_e}^{t_{e+1}-1}\left(\lambda_{\pi^*}^{P,k} - \sum_{t=1}^Tr^k(S_t, A_t)\right)\right|\Big|H_{t_e}\right]\right]\\
    &= \mathbb{E}\left[\sum_{e=1}^E\mathbb{E}\left[\left|\sum_{t=t_e}^{t_{e+1}-1}\left(\lambda_{\pi_e}^{P_e,k} - \sum_{t=1}^Tr^k(S_t, A_t)\right)\right|\Big|H_{t_e}\right]\right]\\
    &= \mathbb{E}\left[\sum_{e=1}^E\left|\sum_{t=t_e}^{t_{e+1}-1}\left(\lambda_{\pi_e}^{P_e,k} - \sum_{t=1}^Tr^k(S_t, A_t)\right)\right|\right]
\end{align}

We now consider two cases. The first case, (a), is where the estimated system model or the sampled system model are not close to the true system model. The second case, (b), is where both the estimated system model and the sampled system model are close the true system model. The total regret can be bounded by using the law of total expectation. Also, to reduce notational clutter, we calculate the regret conditioned on the number of episodes $E$ or $\mathbb{E}[R(T)|E]$. We can then remove the dependency on number of episodes by considering the largest possible number of episodes from Lemma \ref{lem:bound_on_episodes}.

We start by characterizing how close are the estimated transition probability and the sampled transition probability are to the true transition probability. We use $\ell_1$ distance metric for this. For all $s, a$, we construct the set of probability distributions $P'(\cdot|s, a)$,
\begin{align}
    \mathcal{P}_t = \left\{P': \|\Bar{P}_t(\cdot|s, a) - P'(\cdot|s, a)\|_1 \leq \sqrt{\frac{14S\log(2AT)}{\max\{1, n_t(s,a)\}}}\right\}\label{eq:bounded_probaiblity_estimates}
\end{align}
where $n_t=\sum_{t'}^t {\bf 1}_{\{S_{t'} = s, a_{t'}= a\}}$. Using the construction of the set $\mathcal{P}_t$, we can now define the events $\mathcal{E}_t$, and $\widehat{\mathcal{E}}_t$ as:
\begin{align}
    \mathcal{E}_t = \left\{P \in \mathcal{P}_t\right\}\text{, and } \widehat{\mathcal{E}}_t = \left\{P_e \in \mathcal{P}_t\right\}
\end{align}
Further, note that $\mathcal{P}_t$ is $\sigma(H_t), H_t = \{s_1, a_1, \cdots, s_t, a_t\}$ measurable and hence from Lemma \ref{lem:posteriorsampling} we have $\mathbb{P}(\widehat{\mathcal{E}}_t) = \mathbb{P}(P_e\in \mathcal{P}_t)= \mathbb{P}(P\in \mathcal{P}_t) = \mathbb{P}(\mathcal{E}_t)$.

We first bound the regret for the case where the system model is not well estimated or the sampled system from the posterior distribution is far from the true system model. This is equivalent to considering the event in Equation \eqref{eq:bounded_probaiblity_estimates} does not occur or the complementary events $\mathcal{E}_t^c, \hat{\mathcal{E}}_t^c$. We already bounded the probability of this event in Lemma \ref{lem:deviation_of_probability_estimates} using result from \citep{weissman2003inequalities}. In particular, we have:

\begin{align}
    \sum_{e=1}^E \mathbb{E}\left[\Big|\sum_{t=t_e}^{t_{e+1}-1}\left(\lambda_{\pi_e}^{P_e, k} -r^k(S_t, A_t)\right)\Big|\Bigg|\mathcal{E}_{t_e}^c \cup \widehat{\mathcal{E}}_{t_e}^c\right]
    &\leq \sum_{e=1}^E \mathbb{E}\left[\sum_{t=t_e}^{t_{e+1}-1}1\Bigg|\mathcal{E}_{t_e}^c \cup \widehat{\mathcal{E}}_{t_e}^c\right]\label{eq:remove_mod_from_reward_gap}\\
    &\leq \sum_{e=1}^E\sum_{s,a}\nu_k(s,a)\mathbb{P}(\mathcal{E}_{t_e}^c)\\
    &\leq  \sum_{e=1}^Et_e\mathbb{P}(\mathcal{E}_{t_e}^c\cup \widehat{\mathcal{E}}_{t_e}^c)\\
    &\leq  \sum_{t=1}^Tt\left(\mathbb{P}(\mathcal{E}_{t}^c) + \mathbb{P}( \widehat{\mathcal{E}}_{t_e}^c)\right)\label{eq:length_of_epoch}\\
    &=  2\sum_{t=1}^Tt\mathbb{P}(\mathcal{E}_{t}^c)\\
    &\leq  2\sum_{t=1}^{T^{1/4}}t\mathbb{P}(\mathcal{E}_{t}^c) + 2\sum_{t=T^{1/4}+1}^Tt\mathbb{P}(\mathcal{E}_{t}^c)\\
    &\leq  2\sum_{t=1}^{T^{1/4}}t.1 + 2\sum_{t=T^{1/4}+1}^Tt\frac{1}{t^6}\label{eq:probability_bound}\\
    &\leq  2\sqrt{T} + 2\int_{t=T^{1/4}}^\infty \frac{1}{t^5} \\
    &\leq  2\sqrt{T} + 2\frac{1}{4T} \leq  4\sqrt{T}
\end{align}
where Equation \eqref{eq:remove_mod_from_reward_gap} follows from the fact that rewards are bounded by $1$ and the modulus operator is not required on the positive sum. Equation \eqref{eq:length_of_epoch} follows from the fact that $\sum_{s,a}\nu_k(s,a)\leq \sum_{s,a}N_k(s,a) = t_k$. Further, Equation \eqref{eq:probability_bound} follows from Lemma \ref{lem:deviation_of_probability_estimates}. This completes the first case (a).

For the second case, we now break the regret into two terms as follows:
\begin{align}
    &\sum_{e=1}^E\mathbb{E}\left[\Big|\sum_{t=t_e}^{t_{e+1}-1}\left(\lambda_{\pi_e}^{P_e, k} -r^k(S_t, A_t)\right)\Big|\Bigg|\mathcal{E}_{t_e} \cap \widehat{\mathcal{E}}_{t_e}\right]\nonumber\\
    &= \sum_{e=1}^E\mathbb{E}\left[\Big| \sum_{t=t_e}^{t_{e+1}-1}\left(\lambda_{\pi_e}^{P_e, k} - \lambda_{\pi_e}^{P,k} + \lambda_{\pi_e}^{P,k}-r^k(S_t, A_t)\right)\Big|\Bigg|\mathcal{E}_{t_e} \cap \widehat{\mathcal{E}}_{t_e}\right]\\
    &\le \sum_{e=1}^E\mathbb{E}\left[\Big| \sum_{t=t_e}^{t_{e+1}-1}(\lambda_{\pi_e}^{P_e, k} - \lambda_{\pi_e}^{P,k})\Big|\Bigg|\mathcal{E}_{t_e} \cap \widehat{\mathcal{E}}_{t_e}\right] + \sum_{e=1}^E\mathbb{E}\left[\Big| \sum_{t=t_e}^{t_{e+1}-1}(\lambda_{\pi_e}^{P,k}-r^k(S_t, A_t))\Big|\Bigg|\mathcal{E}_{t_e} \cap \widehat{\mathcal{E}}_{t_e}\right]\nonumber\\
    &= R_1^k(T) + R_2^k(T).
\end{align}

The first term, $R_1^k(T)$, denotes the gap of running the optimal policy for the sampled policy on the true MDP in an epoch $e$. We bound this term with the Bellman error defined in Equation \eqref{eq:Bellman_error_definition}. The second term, $R_2^k(T)$, denotes the regret incurred from the deviation of the observed rewards and the expected per step rewards.

We now focus on the $R_1^k$ term. We begin with using Lemma \ref{lem:bound_average_by_bellman} to replace $\lambda_{\pi_e}^{P_e,k} - \lambda_{\pi_e}^{P,k}$. We have the following series of inequalities.

\begin{align}
    R_1^k(T) &= \sum_{e=1}^E\mathbb{E}\left[\Big| \sum_{t=t_e}^{t_{e+1}-1}(\lambda_{\pi_e}^{P_e, k} - \lambda_{\pi_e}^{P,k})\Big|\Bigg|\mathcal{E}_{t_e} \cap \widehat{\mathcal{E}}_{t_e}\right]\\
    &\le \sum_{e=1}^E\mathbb{E}\left[\Big| \sum_{t=t_e}^{t_{e+1}-1}(\lambda^{P_e^k}_{\pi_e} - \lambda_{\pi_e}^{P,k})\Big|\Bigg|\mathcal{E}_{t_e} \cap \widehat{\mathcal{E}}_{t_e}\right]\\
    &= \sum_{e=1}^E\mathbb{E}\left[\Big| \sum_{t=t_e}^{t_{e+1}-1}\mathbb{E}_{(s,a)\sim\pi_e, P}[B_{\pi_e}^{P_e^k, k}(s,a)]\Big|\Bigg|\mathcal{E}_{t_e} \cap \widehat{\mathcal{E}}_{t_e}\right]\label{eq:long_term_average_on_starting_distribution}
\end{align}
where $P_e^k$ is the transition probability for which $\pi_e$ maximizes $\lambda_{\pi}^{P',k}$ for $P'\in\mathcal{P}_{t_e}$. From Lemma \ref{lem:bounded_v_span_of_optimal_MDP} in Appendix, the bias-span $\|h_{\pi_e}^{P_e^k,k}\|_\infty$ is upper bounded by $D$. Further, since $P'\in\mathcal{P}_{t_e}$ we have $\|P_e^k(\cdot|s,a) - P(\cdot|s,a)\|_1\le \sqrt{\frac{14S\log(2AT)}{1\vee n_{t_e}(s,a)}}$ for all $s,a$. Hence, we have $B_{\pi_e}^{P_e^k,k}\le \min\{2D, D\sqrt{\frac{14S\log(2AT)}{1\vee n_{t_e}(s,a)}}\}$.

We now need to bound the expected value in Equation \eqref{eq:long_term_average_on_starting_distribution}. Note that conditioned on filtration $H_{t_e}$ the two expectations $\mathbb{E}_{s,a\sim\pi_e,P}[\cdot]$ and $\mathbb{E}_{s,a\sim\pi_e,P}[\cdot|H_{t_e-1}]$ are not equal as the former is the expected value of the long-term state distribution and the later is the long-term state distribution condition on initial state $s_{t_e-1}$. We now use Assumption \ref{bounded_mixing_time_assumtion} to obtain the following set of inequalities.
\begin{align}
    \mathbb{E}_{(s,a)\sim\pi_e, P}[B_{\pi_e}^{P_e^k, k}(s,a)] &= 
    \mathbb{E}_{(S_t, A_t)\sim\pi_e, P}[B_{\pi_e}^{P_e^k, k}(S_t, A_t)|H_{t_e-1}] \nonumber\\
    &~~+ \left(\mathbb{E}_{(s,a)\sim\pi_e, P}[B_{\pi_e}^{P_e^k, k}(s,a)]- \mathbb{E}_{(S_t, A_t)\sim\pi_e, P}[B_{\pi_e}^{P_e^k, k}(S_t, A_t)|H_{t_e-1}]\right)\\
    &\le\mathbb{E}_{(S_t, A_t)\sim\pi_e, P}[B_{\pi_e}^{P_e^k, k}(S_t, A_t)|H_{t_e-1}] \nonumber\\
    &~~+ 2D\left(\left|\sum_{a,s}\left(\pi_e(a|s)d_{\pi_e}(s) - \pi_e(a|s)P_{\pi,S_{t_e-1}}^{t-t_e+1}(s)\right)\right|\right)\\
    &=\mathbb{E}_{(S_t, A_t)\sim\pi_e, P}[B_{\pi_e}^{P_e^k, k}(S_t, A_t)|H_{t_e-1}] \nonumber\\
    &~~+ 2D\left(\left|\sum_{s}\left(\left(d_{\pi_e}(s) - P_{\pi,S_{t_e-1}}^{t-t_e+1}(s)\right)\left(\sum_{a}\pi_e(a|s)\right)\right)\right|\right)\\
    &=\mathbb{E}_{(S_t, A_t)\sim\pi_e, P}[B_{\pi_e}^{P_e^k, k}(S_t, A_t)|H_{t_e-1}] \nonumber\\
    &~~+ 2D\left(2\|d_{\pi_e}(s) - P_{\pi,S_{t_e-1}}^{t-t_e+1}(s)\|_{TV}\right)\label{eq:change_expectation_to_diff_prob}\\
    &\le\mathbb{E}_{(S_t, A_t)\sim\pi_e, P}[B_{\pi_e}^{P_e^k, k}(S_t, A_t)|H_{t_e-1}] + 4CD\rho^{t-t_e}\label{eq:TV_bounded_by_l1}
\end{align}
where Equation \eqref{eq:change_expectation_to_diff_prob} comes from Assumption \ref{bounded_mixing_time_assumtion} for running policy $\pi_e$ starting from state $s_{t_e-1}$ for $t-t_e+1$ steps and from Lemma \ref{lem:bound_bellman_s_a_main}. Equation \eqref{eq:TV_bounded_by_l1} follows from the fact that $\sum_a\pi(a|s) = 1$ and the fact that $\ell_1$-distance of probability distribution is twice the total variation distance \citep[{Proposition 4.2}]{levin2017markov}.

Using Equation \eqref{eq:TV_bounded_by_l1} with Equation \eqref{eq:long_term_average_on_starting_distribution} we get,

\begin{align}
    R_1^k(T) &\le \sum_{e=1}^E\mathbb{E}\left[\Big| \sum_{t=t_e}^{t_{e+1}-1}\mathbb{E}_{(s,a)\sim\pi_e, P}[B_{\pi_e}^{P, k}(s,a)]\Big|\Bigg|\mathcal{E}_{t_e} \cap \widehat{\mathcal{E}}_{t_e}\right]\\
    &\le \sum_{e=1}^E\mathbb{E}\left[\Big| \sum_{t=t_e}^{t_{e+1}-1}\mathbb{E}_{(S_t, A_t)\sim\pi_e, P}[B_{\pi_e}^{P, k}(S_t, A_t)|H_{t_e-1}] + 4CD\rho^{t-t_e}\Big|\Bigg|\mathcal{E}_{t_e} \cap \widehat{\mathcal{E}}_{t_e}\right]\\
    &= \sum_{e=1}^E\mathbb{E}\left[\Big| \sum_{t=t_e}^{t_{e+1}-1}\mathbb{E}_{(S_t, A_t)\sim\pi_e, P}[B_{\pi_e}^{P, k}(S_t, A_t)|H_{t_e-1}]\Big|\Bigg|\mathcal{E}_{t_e} \cap \widehat{\mathcal{E}}_{t_e}\right] + \sum_{e=1}^E\frac{4CD}{1-\rho}\label{eq:expected_diameter_deviation_from_steady_state}\\
    &= \sum_{e=1}^E\mathbb{E}\left[\Big| \sum_{t=t_e}^{t_{e+1}-1}\mathbb{E}_{(S_t, A_t)\sim\pi_e, P}[B_{\pi_e}^{P, k}(S_t, A_t)|H_{t_e-1}]\Big|\Bigg|\mathcal{E}_{t_e} \cap \widehat{\mathcal{E}}_{t_e}\right] + \frac{4ECD}{1-\rho}\label{eq:bound_on_restart_deviations}
\end{align}

where Equation \eqref{eq:expected_diameter_deviation_from_steady_state} follows from summation of series $\rho^{t-t_e}$ for $t\to\infty$.

We can now construct a Martingale sequence to bound the summation in Equation \eqref{eq:bound_on_restart_deviations}. We construct a Martingale sequence as 
\begin{align}
    X_t^e = \mathbb{E}_{(S_t, A_t)\sim\pi_e,P}[\sum_{t=t_e}^{t_{e+1}-1}B_{\pi_e}^{P_e^k,k}(S_t, A_t)|H_{t-1}]; t_e\le t<t_{e+1}
\end{align}
such that $|X_t^e-X_{t-1}^e| \le 4D$ for all $t,e$. We can now use Azuma-Hoeffding's inequality to bound $X_{t_e}^e$ as:

\begin{align}
    \Bigg|\left(X_{t_e}^e-\sum_{t=t_e}^{t_{e+1}-1}B_{\pi_e}^{P_e^k,k}(S_t, A_t)\right)\Bigg|\le 4D\sqrt{(t_{e+1}-t_e)\log(2/T)}
\end{align}
with probability at least $1-1/T$. This eventually gives an upper bound on $X_t^e$ as:
\begin{align}
    |X_t^e|-\Bigg|\sum_{t=t_e}^{t_{e+1}-1}B_{\pi_e}^{P_e^k,k}(S_t, A_t)\Bigg| &\le \Bigg|\left(X_t^e-\sum_{t=t_e}^{t_{e+1}-1}B_{\pi_e}^{P_e^k,k}(S_t, A_t)\right)\Bigg|\\
    &\le 4D\sqrt{(t_{e+1}-t_e)\log(2/T)}\\
    \implies |X_t^e| &\le \Bigg|\sum_{t=t_e}^{t_{e+1}-1}B_{\pi_e}^{P_e^k,k}(S_t, A_t)\Bigg| + 4D\sqrt{(t_{e+1}-t_e)\log(2/T)}\label{eq:bound_bellman_expected_episode}
\end{align}

Hence, for $n_{t_e}(s,a) = \sum_{t'=1}^{t_e-1}{\bf 1}_{\{S_{t'}=s,a_{t'}=a\}} = \sum_{e'=1}^{e-1}\sum_{t'=t_{e'}}^{t_{e'+1}-1}{\bf 1}_{\{S_{t'}=s,a_{t'}=a\}} = \sum_{e'=1}^{e-1}\nu_{e}(s,a)$, we have
\begin{align}
    R_1(T)&\le \sum_{e=1}^E\mathbb{E}\left[\Big|D\sum_{t_e}^{t_{e+1}-1}B_{\pi_e}^{P_e^k,k}(S_t, A_t)\Big| + \Big|4D\sqrt{(t_{e+1}-t_e)\log (2T)}\Big|\Bigg|\mathcal{E}_{t_e} \cap \widehat{\mathcal{E}}_{t_e}\right] + \frac{4ECD}{1-\rho}\label{eq:Azuma_Hoeffding_Bellman_error}\\
    &\le \sum_{e=1}^E\mathbb{E}\left[\Big|D\sum_{s,a}\nu_e(s,a)B_{\pi_e}^{P_e^k,k}(s,a)\Big| + \Big|4D\sqrt{(t_{e+1}-t_e)\log (2T)}\Big|\Bigg|\mathcal{E}_{t_e} \cap \widehat{\mathcal{E}}_{t_e}\right] + \frac{4ECD}{1-\rho}\\
    &\le \sum_{e=1}^E\mathbb{E}\left[\Big|D\sum_{s,a}\nu_e\sqrt{\frac{14S\log(2AT)}{n_{t_e}(s,a)}}\Big| + \Big|4D\sqrt{(t_{e+1}-t_e)\log (2T)}\Big|\Bigg|\mathcal{E}_{t_e} \cap \widehat{\mathcal{E}}_{t_e}\right] + \frac{4ECD}{1-\rho}\label{eq:pre_remove_modulus}\\
    &= \sum_{e=1}^E\mathbb{E}\left[D\sum_{s,a}\nu_e\sqrt{\frac{14S\log(2AT)}{n_{t_e}(s,a)}} + 4D\sqrt{(t_{e+1}-t_e)\log (2T)}\Bigg|\mathcal{E}_{t_e} \cap \widehat{\mathcal{E}}_{t_e}\right] + \frac{4ECD}{1-\rho}\label{eq:remove_modulus}\\
    &= D\sqrt{14S\log(2AT)}\mathbb{E}\left[\sum_{e=1}^E\sum_{s,a}\nu_e\frac{1}{\sqrt{n_{t_e}(s,a)}}\Big|\mathcal{E}_{t_e} \cap \widehat{\mathcal{E}}_{t_e}\right] + \mathbb{E}\left[\sum_{e=1}^E4D\sqrt{(t_{e+1}-t_e)\log (2T)}\Big|\mathcal{E}_{t_e} \cap \widehat{\mathcal{E}}_{t_e}\right] \nonumber\\
    &~~~~+ \frac{4ECD}{1-\rho}\\
    &\le D\sqrt{14S\log(2AT)}\sum_{s,a}(\sqrt{2}+1)\sqrt{N(s,a)} + 4D\sqrt{ET\log (2T)} + \frac{4ECD}{1-\rho}\label{eq:jaksch_sum_sqrt}\\
    &\le D(\sqrt{2}+1)\sqrt{14S\log(2AT)}\sqrt{SAT} + 4D\sqrt{ET\log (2T)} + \frac{4ECD}{1-\rho}\label{eq:Cauchy_Schwarz}
\end{align}
where Equation \eqref{eq:Azuma_Hoeffding_Bellman_error} comes from replacing $X_t^e$ of Equation \eqref{eq:bound_on_restart_deviations} by Equation \eqref{eq:bound_bellman_expected_episode}.
Equation \eqref{eq:remove_modulus} follows from the fact that both terms in modulus operator in Equation \eqref{eq:pre_remove_modulus} are non-negative.
In Equation \eqref{eq:jaksch_sum_sqrt} the first term follows from \citep{jaksch2010near}[Lemma 19] and second term follows from Cauchy-Schwarz inequality. Equation \eqref{eq:Cauchy_Schwarz} follows from Cauchy-Schwarz inequality.

We bound the $R_2^k(T)$ term similarly as:
\begin{align}
    R_2^k(T) &= \sum_{e=1}^E\mathbb{E}\left[\Big| \sum_{t=t_e}^{t_{e+1}-1}\left(\lambda_{\pi_e}^{P,k} - r^k(S_t, A_t)\right)\Big|\Bigg|\mathcal{E}_{t_e} \cap \widehat{\mathcal{E}}_{t_e}\right]\\
    &\le 2\sqrt{ET\log(2T)} + \frac{4EC}{1-\rho}
\end{align}
with probability $1/T$.

Now, combining the regret using the law of total expectation, we have:
\begin{align}
\sum_{e=1}^E\mathbb{E}\left[\Big| \sum_{t=t_e}^{t_{e+1}-1}\left(\lambda_{\pi_e}^{P_e, k} -r^k(S_t, A_t)\right)\Big|\right] &= \sum_{e=1}^E\mathbb{E}\left[\Big| \sum_{t=t_e}^{t_{e+1}-1}\left(\lambda_{\pi_e}^{P_e, k} -r^k(S_t, A_t)\right)\Big|\Bigg| \mathcal{E}_t^c\cup\widehat{\mathcal{E}}_t^c\right]\mathbb{P}\left(\mathcal{E}_t^c\cup\widehat{\mathcal{E}}_t^c\right) \nonumber\\
&~~+ \sum_{e=1}^E\mathbb{E}\left[\Big| \sum_{t=t_e}^{t_{e+1}-1}\left(\lambda_{\pi_e}^{P_e, k} -r^k(S_t, A_t)\right)\Big|\Bigg| \mathcal{E}_t\cap\widehat{\mathcal{E}}_t\right]\mathbb{P}\left(\mathcal{E}_t\cap\widehat{\mathcal{E}}_t\right)\\
&\le \sum_{e=1}^E\mathbb{E}\left[\Big| \sum_{t=t_e}^{t_{e+1}-1}\left(\lambda_{\pi_e}^{P_e, k} -r^k(S_t, A_t)\right)\Big|\Bigg| \mathcal{E}_t^c\cup\widehat{\mathcal{E}}_t^c\right] \nonumber\\
&~~+ \sum_{e=1}^E\mathbb{E}\left[\Big| \sum_{t=t_e}^{t_{e+1}-1}\left(\lambda_{\pi_e}^{P_e, k} -r^k(S_t, A_t)\right)\Big|\Bigg| \mathcal{E}_t\cap\widehat{\mathcal{E}}_t\right]\label{eq:prob_max_1}\\
&\le 4\sqrt{T} + (\sqrt{2} + 1)DS\sqrt{14AT\log(2AT)} \nonumber\\
&~~~+4D\sqrt{ET\log (2T)} + 2\sqrt{ET\log(2T)} + \frac{4EC(D+1)}{1-\rho}
\end{align}
where Equation \eqref{eq:prob_max_1} follows from the fact that probability for any event can be at most $1$.

Combining everything gives the regret:
\begin{align}
    R(T) = \frac{LK}{T}\left(4\sqrt{T} + (\sqrt{2} + 1)DS\sqrt{14AT\log(2AT)} + (4D+2)\sqrt{ET\log (2T)} + \frac{4EC(D+1)}{1-\rho}\right)
\end{align}
and completes the proof from using the upper bound of $E$ from Lemma \ref{lem:bound_on_episodes}.

\end{proof}

\color{black}

\section{Model Free Algorithm} \label{mfa}
In the previous section, we developed a model based tabular algorithm for joint function optimization. However, as the state space, action space, or number of agents increase the tabular algorithm becomes infeasible to implement. In this section, we consider a policy gradient based algorithm which can be efficiently implemented using (deep) neural networks  thus alleviating the requirement of a tabular solution for large MDPs.

For the model free policy gradient algorithm, we will use finite time horizon MDP, or $T < \infty$ in our MDP $\mathcal{M}$. This is a practical scenario where communication networks optimize fairness among users for finite duration \citep{margolies2016exploiting}. We now describe a model free construction to obtain the optimal policy. %
We use a neural network parameterized by $\theta$. The objective thus becomes to find optimal parameters $\theta^*$, which maximizes,
\begin{align}
    \arg\max_{\theta}f\left((1-\gamma)J_{\pi_\theta}^1, \cdots, (1-\gamma)J_{\pi_\theta}^K\right).\label{eq:parameterized_objective}
\end{align}
For model-free algorithm, we assume that the function is differentiable. In case, the function is not differentiable, sub-gradients of $f$ can be used to optimize the objective \citep{nesterov2018lectures}.
Gradient estimation for Equation \eqref{eq:parameterized_objective} can be obtained using chain rule:
\begin{align}
    \nabla_{\theta}f &= \sum_{k\in [K]}(1-\gamma)\frac{ \partial f}{ \partial (1-\gamma)J_{\pi}^k}\nabla_{\theta} J_{\pi}^k\label{eq:policy_gradient}
\end{align}

For all $k$, $J_\pi^k$ can be replaced with averaged cumulative rewards over $N$ trajectories for the policy at $i^{th}$ step, where a trajectory $\tau$ is defined as the tuple of observations, or $\tau = (s_0, a_0, r_0^1, \cdots, r_0^K, s_1, a_1, r_1^1, \cdots, r_1^K, \cdots)$. Further, $\nabla_{\theta} J_{\pi}^k$ can be estimated using  REINFORCE algorithm  proposed in \citep{williams1992simple,sutton2000policy} for all $k$, and is given as
\begin{align}
    \widehat{\nabla}_{\theta} J_{\pi}^k = \frac{1}{N}\sum_{j=1}^N\sum_{t=0}^T\nabla_{\theta}\log\pi_{\theta}(A_{t,j}|S_{t,j})\sum_{t'=t}^Tr^k(S_{t', j}, A_{t', j})~\forall~k\in[K].\label{eq:reinforce_grad}
\end{align}
Further, $J_\pi^k $ is estimated as
\begin{align}
 \hat{J}_\pi^k = \frac{1}{N}\sum_{t=0}^Tr^k(S_{t, j}, A_{t, j})~\forall~k\in[K].\label{eq:J_estimate}
 \end{align}
For a learning rate $\eta$, parameter update step to optimize the parameters becomes
\begin{align}
    \theta_{i+i} = \theta_i + \eta (1-\gamma)\sum_{k\in[K]}\frac{ \partial f}{ \partial (1-\gamma)J_{\pi}^k}\Big|_{J_\pi^k = \hat{J}_\pi^k}\widehat{\nabla}_{\theta} J_{\pi}^k\label{eq:Gradient_ascent}
\end{align}

\color{black}
For example, consider alpha-fairness utility defined as

\begin{align}
    f((1-\gamma)J_\pi^1, \cdots, (1-\gamma)J_\pi^K) = \sum_{k=1}^K \frac{1}{1-\alpha}\left((J_\pi^k)^{1-\alpha} - 1\right)\label{eq:alpha_fair_def}
\end{align}

 Then the corresponding gradient estimate can be obtained as
\begin{align}
	\hat{\nabla}_{\theta} f = \sum_{k\in [K]} \frac{\sum\limits_{j=1}^N\sum\limits_{t=0}^T\nabla_{\theta}\log\pi(A_{t,j}|S_{t,j})\sum\limits_{\tau=t}^T \gamma^{\tau}r_k(S_{\tau,j}, A_{\tau,j})} {(1-\gamma)^{\alpha-1}\left(\sum\limits_{j= 1}^N\sum\limits_{t=0}^T \gamma^t r_k(S_{t, j}, A_{t, j})\right)^{\alpha}}. \label{eq:alpha_fair_grad}  
\end{align}
 
\color{black}

The proposed Model Free Policy Gradient algorithm for joint function optimization is described in Algorithm \ref{alg:model_free_PG}. The algorithm takes as input the parameters $\mathcal{S}, \mathcal{A}, [K], T, \gamma, f$ of MDP $\mathcal{M}$, number of sample  trajectories $N$, and learning rate $\eta$ as input. The policy neural network is initialized with weights $\theta$ randomly. It then collects $N$ sample trajectories using the policy with current weights in Line 4. In Line 5, the gradient is calculated using Equations \eqref{eq:policy_gradient}, \eqref{eq:reinforce_grad}, and \eqref{eq:J_estimate} on the $N$ trajectories. In optimization step of Line 6, the weights are updated using gradient ascent.
\begin{algorithm}[!htbp]
    \begin{small}
	\caption{Model Free Joint Policy Gradient}\label{alg:model_free_PG}
    \begin{algorithmic}[1]
        \Procedure{Joint Policy Gradient  }{$\mathcal{S}, \mathcal{A}, [K], T, \gamma, f, N, \eta$}
            \State Initialize $\pi_{\theta_0}(a,s)$ \Comment{Initialize the neural
            network with random weights $\theta$}
            \For{$i = 0, 1, \cdots,\ until\ convergence\ criteria$}
                \State Collect $N$ trajectories using policy $\pi_{\theta_i}$
                \State Estimate gradient using Equation \eqref{eq:policy_gradient}, \eqref{eq:reinforce_grad}, \eqref{eq:J_estimate}
                \State Perform Gradient Ascent using Equation \eqref{eq:Gradient_ascent}
            \EndFor
            \State Return $\pi_{\theta}$
        \EndProcedure
    \end{algorithmic}
    \end{small}
\end{algorithm}
\section{Evaluations} \label{simulations}

In this section, we consider two systems. The first is the cellular scheduling, where multiple users connect to the base station. The second is a multiple-queue system which models multiple roads merging into a single lane. In both these systems, the proposed algorithms are compared with some baselines including the linear metric adaptation of reward at each time. 

\subsection{Cellular fairness maximization}
The fairness maximization has  been at the heart of many other resource allocation problems such as cloud resource management, manufacturing optimization, etc. \citep{perez2009responsive,zhang2015fairness}. The problem of maximizing wireless network fairness has been extensively studied in the past  \citep{margolies2016exploiting,kwan2009proportional,bu2006generalized,li2018resource}. With increasing number of devices that need to access wireless network and ever upgrading network architectures, this problem still remains of practical interest. We consider two problem setup for fairness maximization, one with finite state space, and other with infinite state space. For finite state space, we evaluate both the proposed model-based algorithm (Algorithm \ref{alg:model_based_algo}) and the proposed model-free algorithm (Algorithm \ref{alg:model_free_PG}) while for infinite state space, we evaluate only model-free algorithm as tabular model-based algorithm cannot be implemented for this case.

\subsubsection{Problem Setup}
We consider fairness metric of the form of generalized $\alpha$-fairness proposed in \citep{altman2008generalized}. For the rest of the paper, we will call this metric as $\alpha$-fairness rather than generalized $\alpha$-fairness. The problem of maximizing finite horizon $\alpha$-fairness for multiple agents attached to a base station is defined as
\begin{align}
 C_{\alpha}(T) = &   \max_{\{a_{k,t}\}_{K\times T}}\  \sum_{k=1}^K\frac{1}{1-\alpha}\left(\left(\frac{1}{T}\sum_{t=1}^T a_{k,t}r_{k,t}\right)^{(1-\alpha)} - 1\right)\label{eq:alpha_fair_eq}\\
  &  \text{s.t.} \sum_{k=1}^Ka_{k,t} = 1 \forall t\in\{1, 2, \cdots, T\}\\
    &\quad a_{i,j} \in {0, 1}
\end{align}
where, $a_{k,t} = 1$ if the agent $k$ obtains the network resource at time $t$, and $0$ otherwise. This implies, at each time $t$, the scheduler gives all the resources to only one of the attached user. Further, $r_{k, t}$ denotes the rate at which agent $k$ can transmit at time $t$ if allocated network resource. {\color{black}Cellular networks typically use Proportional Fair (PF) utility which is a special case of the above metric for $\alpha \to 1$ \citep{holma2005wcdma}, and is defined as:}
\begin{align}
 C_1(T) = &   \max_{\{a_{k,t}\}_{K\times T}}\  \sum_{k=1}^K\log\left(\frac{1}{T}\sum_{t=1}^T a_{k,t}r_{k,t}\right)\label{eq:prop_fair_eq}
\end{align}
We note that $r_{k, t}$ is only known causally limiting the use of offline optimization techniques and making the use of learning-based strategies for the problem important. We evaluate our algorithm for $\alpha=2$ fairness, and proportional fairness for $T = 1000$.

\subsubsection{Proportional Fairness}
We let the number of users attached to the network, $K$, belong to the set $\{2, 4, 6\}$. The state space of each agent comes from its channel conditions. We assume that the channel for a agent can only be in two conditions $\{good,\ bad\}$, where the good and bad conditions for each agent could be different. The action at each time is a one-hot vector with the entry corresponding to the agent receiving the resources set to one. This gives $|\mathcal{S}| = 2^K$ (corresponding to the joint channel state of all agents), and $|\mathcal{A}| = K$ ($K$ actions correspond to the agent that is selected in a time slot). Based on the channel state of agent, the scheduling decision determines the agent that must be picked in the time-slot. Rate $r_{k,t}$, for agent $k$ at time $t$, is dependent on  the state of the agent $s_{k,t}$ and is mentioned in Table \ref{rate_table}.
Each agent remains in the same state with probability of $0.8$, and moves to a different state $s\sim U(\mathcal{S})$ with probability $0.2$. The state transition model becomes,
\begin{small}
\begin{align}
    \forall\ k,\ s_{k, t+1}= \begin{cases}
                s_{k,t}, &w.p.\ 0.8\\
                s\sim U(\{good, bad\}) &w.p.\ 0.2
            \end{cases}
\end{align}
\end{small}

\begin{table}[!htbp]
\centering
\begin{tabular}{|c |c |c | c| c| c| c|} 
 \hline
 Agent state & $r_{1,t}$ & $r_{2,t}$ & $r_{3,t}$ & $r_{4,t}$ & $r_{5,t}$ & $r_{6,t}$\\ [0.5ex] 
 \hline
 $good$ &1.50 & 2.25 & 1.25 &1.50 & 1.75 & 1.25 \\ [0.5ex] 
 $bad$ &0.768 & 1.00 & 0.384 &1.12 & 0.384 & 1.12\\ [0.5ex] 
 \hline
\end{tabular}
\caption{Agent rate $r_{k, t}$ (in Mbps) based on agent state $s_{k,t}$. Rate values are practically observable data rates over a wireless network such as 4G-LTE.}
\label{rate_table}
\end{table}

We compare our model-based and model-free algorithms with practically implemented algorithm of Blind Gradient Estimation \citep{margolies2016exploiting,bu2006generalized} in network schedulers and SARSA based algorithm devised by \citep{perez2009responsive}. We first describe the algorithms used in evaluations for maximizing proportional fairness in finite state systems.
\begin{itemize}[leftmargin=*]
    \item {\bf Blind Gradient Estimation Algorithm (BGE)}: This heuristic allocates the resources based on the previously allocated resources to the agents. Starting from $t=1$, this policy allocates resource to  agent $k^*_t$ at time $t$, where
    \begin{align}
        k^*_t = \arg\max_{k\in[K]} \frac{r_{k,t}}{\sum_{t'=0}^{t-1}\alpha_{k, t'}r_{k, t'}}; \alpha_{k, t'} = \begin{cases}
                1, & k= k^*_{t'}\\
                0, & k\neq k^*_{t'}
            \end{cases}
    \end{align}

    BGE is used as de facto standard for scheduling in cellular systems \citep{holma2005wcdma}, and has been shown to be asymptotically optimal for the proportional fairness metric \citep{1003822}.
    
    \item {\bf DQN Algorithm}: This algorithm based on SARSA \citep{sutton1998introduction} and DQN \cite{mnih2015human}. The reward at each time $\tau$ is the fairness of the system at time $\tau$, or $f_\tau = C_{\alpha}(t)$ 
    The DQN neural network consists of two fully connected hidden layers with $100$ units each with ReLU activation and one output layer with linear activation. We use $\gamma = 0.99$, $\epsilon = 0.05$, and Adam optimizer with learning rate $\eta = 0.01$ to optimize the network. The batch size is $64$ and the network is trained for $1500$ episodes.
    
    \color{black}
    \item {\bf Vanilla Policy-Gradient Algorithm}: This algorithm is based on the REINFORCE policy gradient algorithm \citep{williams1992simple}. Similar to the SARSA algorithm, the reward at each time $\tau$ is the fairness of the system at time $\tau$, or $f_\tau = C_{\alpha}(t)$ 
    We use $\gamma = 0.99$, and learning rate $\eta = 1\times 10^{-4}$.
    \color{black}
    
    \item {\bf Proposed Model Based Algorithm}: We describe the algorithm for infinite horizon, so we maximize the policy for infinite horizon proportional fairness problem by discounting the rewards as
    
    \begin{align}
        \lim_{T\to\infty}\sum_{k=1}^K\log\left(\frac{1}{T}\sum_{t=1}^T\gamma^t\alpha_{k,t}r_{k,t}\right)
    \end{align}
    The learned policy is evaluated on finite horizon environment of $T=1000$. We keep $\gamma = 0.99$ for implementation of Algorithm \ref{alg:model_based_algo}. We use a fixed episode length of $100$ and update policy after every $\tau = 100$ steps. In Algorithm \ref{alg:model_based_algo}, the convex optimization is solved using CVXPY \cite{diamond2016cvxpy}.

    \item {\bf Proposed Model Free Algorithm}: %
     Since $\log(\cdot)$ is differentiable, the gradient in Equation \eqref{eq:policy_gradient} is evaluated using Equation \eqref{eq:alpha_fair_grad} at $\alpha=1$. The neural network consists of a single hidden layer with $200$ neurons, each having ReLU activation function. The output layer uses softmax activation. The value of other hyperparameters are $\gamma=0.99$, $\eta = 1\times10^{-3}$, and batch size $N=100$. The algorithm source codes for the proposed algorithms have been provided at \citep{code_nlmarl}.
\end{itemize}

\begin{figure*}[tbhp]
	\centering
	\subfigure[$K=2$]{ 
		\includegraphics[trim=0in .1in .4in .2in, clip, width=.45\textwidth]{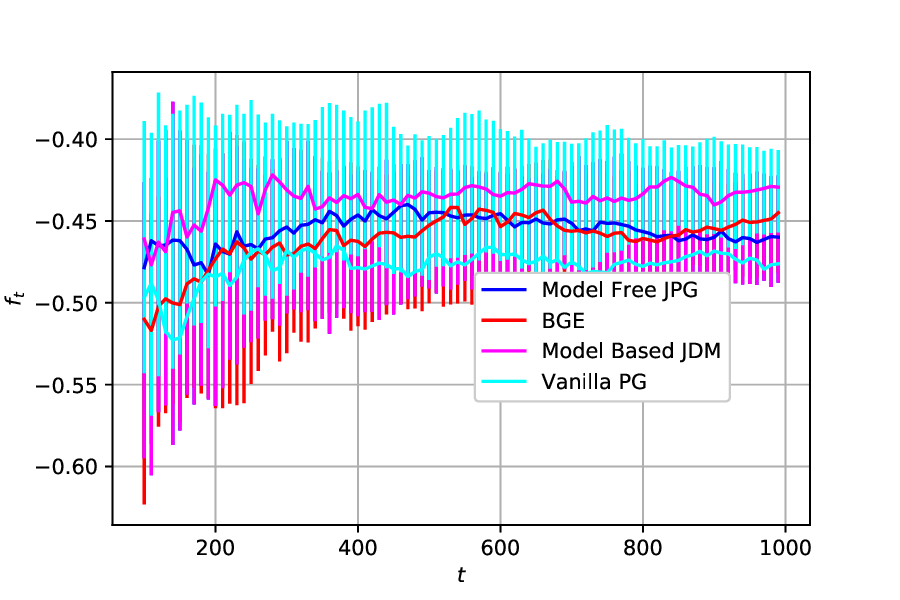}
		\label{fig:fig_k_2}}
	\subfigure[$K=2$ (with DQN)]{ 
		\includegraphics[trim=0in .1in .4in .2in, clip, width=.45\textwidth]{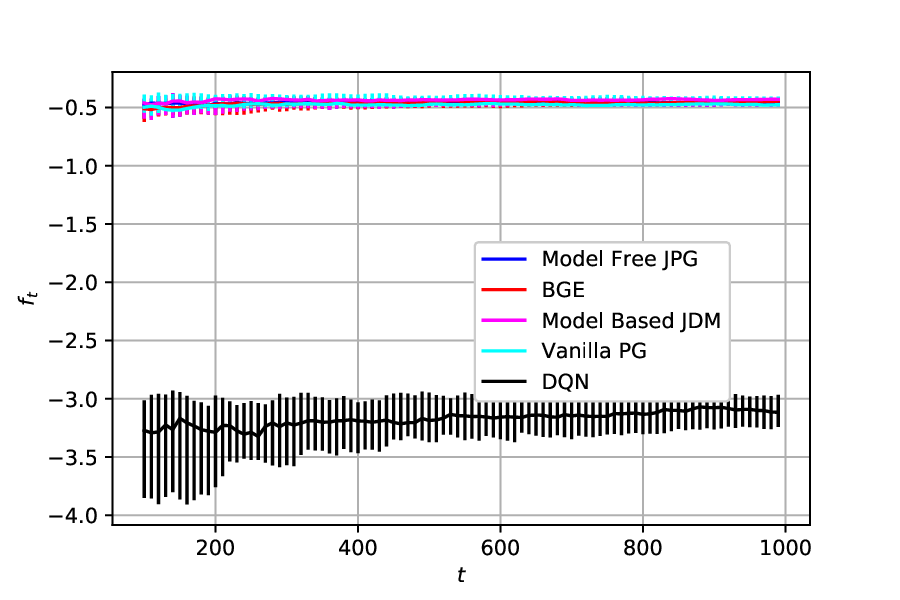}
		\label{fig:fig_k_2_dqn}}
	\subfigure[$K=4$]{ 
		\includegraphics[trim=0in .1in .4in .2in, clip, width=.45\textwidth]{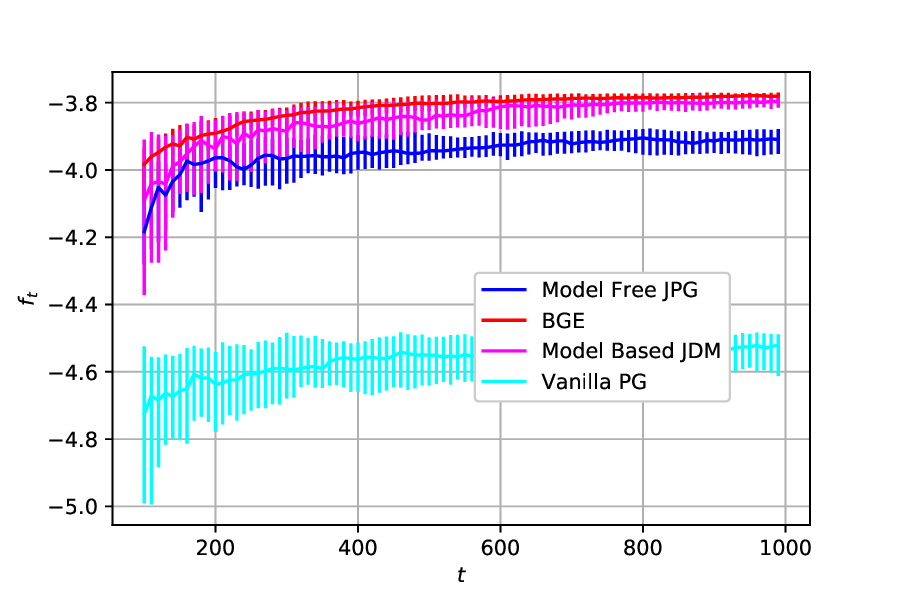}
		\label{fig:fig_k_4}}
	\subfigure[$K=4$ (with DQN)]{ 
		\includegraphics[trim=0in .1in .4in .2in, clip, width=.45\textwidth]{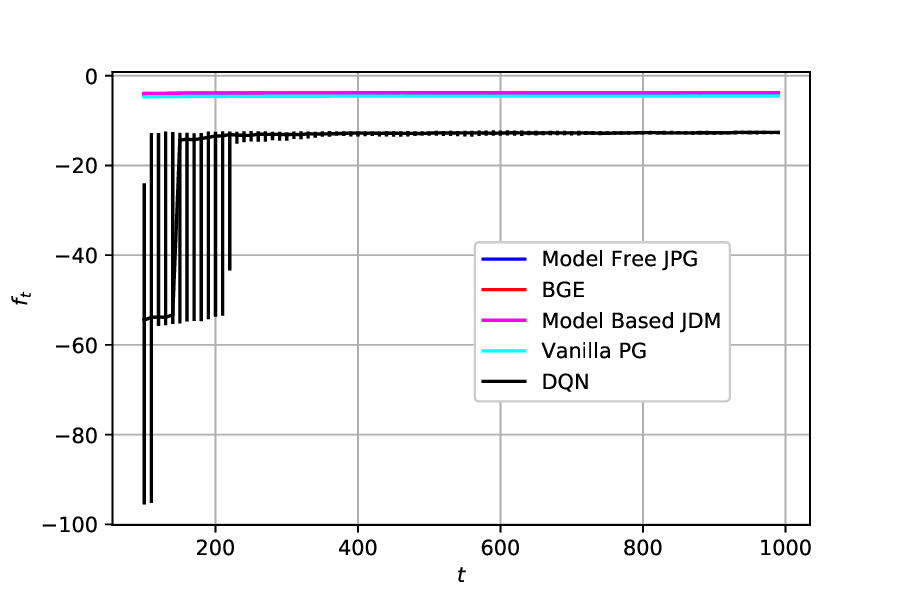}
		\label{fig:fig_k_4_dqn}}
	\subfigure[$K=6$]{ 
		\includegraphics[trim=0in .1in .4in .2in, clip, width=.45\textwidth]{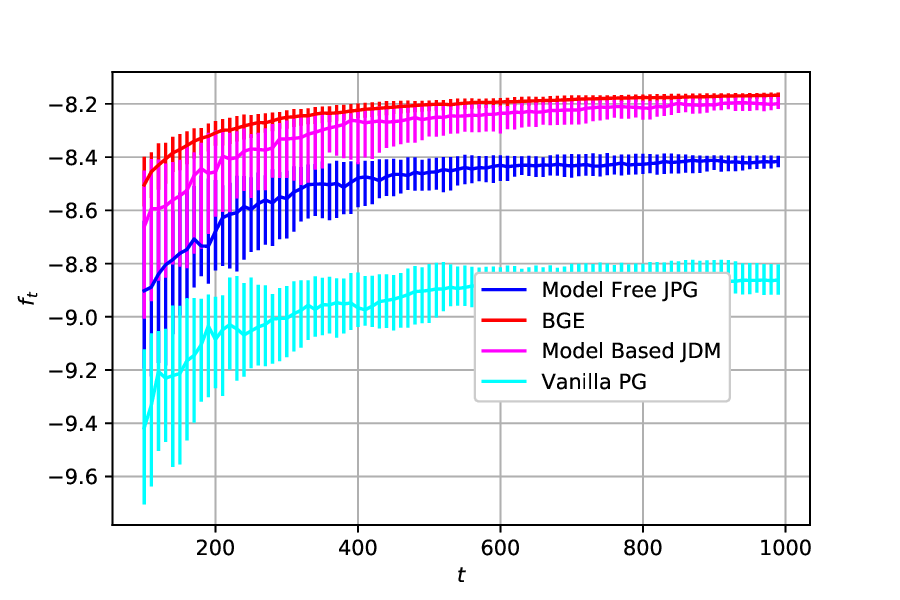}
		\label{fig:fig_k_6}}
	\subfigure[$K=6$ (with DQN)]{ 
		\includegraphics[trim=0in .1in .4in .2in, clip, width=.45\textwidth]{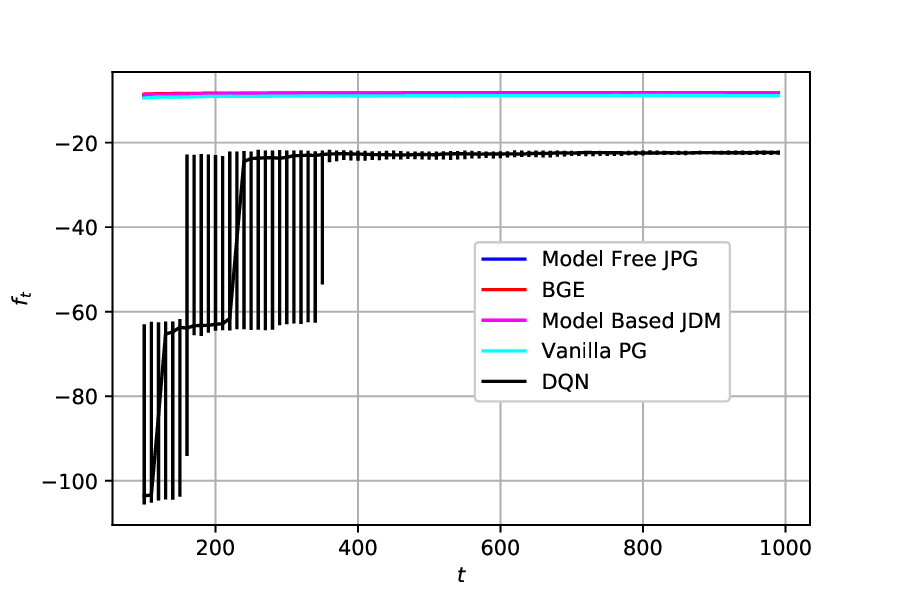}
		\label{fig:fig_k_6_dqn}}		
	\caption{\small Proportional Fairness for Cellular Scheduling,  $f_t$ v/s $t$ (Best viewed in color)}
	\label{fig:final_fig}
\end{figure*}

{\bf Proportional Fairness Simulation Results:}
We trained the SARSA algorithm and the model based Algorithm \ref{alg:model_based_algo} for $5000$ time steps for each value of $K$. 
{\color{black}To train model free Algorithm \ref{alg:model_free_PG} and the Vanilla Policy Gradient algorithm, we used $1000$ batches where each batch contains $36$ trajectories of length $1000$ time steps.} Note that the Blind Gradient Estimation algorithm doesn't need training as it selects the agent based on observed rewards. For all the algorithms we performed a grid search to find the hyperparameters.

We show the performance of policies implemented by each of the algorithm. Each policy is run $50$ times and median and inter-quartile range is shown in Figure \ref{fig:final_fig} for  each policy. The policy performance for $K=2$, $K=4$, and $K=6$ is shown in Figure \ref{fig:fig_k_2}, Figure \ref{fig:fig_k_4}, and Figure \ref{fig:fig_k_6}, respectively. 

We note that the performance of model-based algorithm (Algorithm \ref{alg:model_based_algo}) and that  of the model-free algorithm (Algorithm \ref{alg:model_free_PG}) are close. For $K=2$, the model-based algorithm outperforms the BGE algorithm. For $K\in \{4, 6\}$, the gap between the model based algorithm and BGE algorithm is because of the finite time horizon. The proposed framework assumes an infinite horizon framework, but the algorithm is trained for finite time horizon. The regret of  $\tilde{O}\left(LDKS\sqrt{\frac{A}{T}}\right)$ also guarantees that the proposed algorithm converges towards optimal policies for large $T$.

\color{black}
From Figure \ref{fig:fig_k_2_dqn}, Figure \ref{fig:fig_k_4_dqn}, and Figure \ref{fig:fig_k_6_dqn}, we note that the DQN algorithm performs much worse than expected. The reason for this is that the joint objective function of fairness is non-linear and is not properly modelled with standard RL formulation. Also, from Figure \ref{fig:fig_k_2}, we note that for $K=2$, policy gradient algorithm which uses fairness till time $t$ can still learn a a good policy, but the performance is still not at par with the proposed framework. 
This is because, using the value of the joint objective as reward works as a linear approximation of the joint reward function. Hence, if the approximation is worse, the policy gradient algorithm with joint objective as reward will not be able to optimize the true reward function. Note that as $K$ increases, the  approximation error $\sum_{k}\log(\lambda_k)-\lambda_k$ increase. In the next experiment, we will demonstrate that if the approximation error is too large, the policy gradient algorithm can perform even worse than the DQN algorithm.
\color{black}

\subsubsection{$\alpha=2$ Fairness}
We now evaluate our algorithm with the metric of $\alpha$-fairness, where no optimal baseline is known. We also consider a large state-space to show the scalability of the proposed model-free approach.  We consider a Gauss-Markov channel model \citep{ariyakhajorn2006comparative} for modeling the channel state to the different users, and let the number of users be $K=8$. Under Gauss-Markov Model, channel state of each user $k$ varies as,
\begin{align}
    X_{k, t} = \sqrt{1-\beta^2}X_{k, t-1} + \beta \epsilon_t,\ \ \epsilon_t\sim\mathcal{N}(0,1).
\end{align}
We assume $X_{k, 0} \sim \mathcal{N}(0,1)$ for each $k\in[K]$. The rate for each user $i$ at time $t$ and in channel state $X_{i, t}$ is given as,
\begin{align}
    r_{k, t} = P_k|X_{k,t}|^2,
\end{align}
where $P_k$ is multiplicative constant for the $k^{th}$ user, which indicates the average signal-to-noise ratio to the user. We let $P_k = k^{-0.2}$.

Since the state space is infinite, we only evaluate  the model free algorithm. The gradient update equation is defined in Equation (\ref{eq:alpha_fair_grad}) with $\alpha=2$. The neural network consists of a single hidden layer with $200$ neurons, each having ReLU activation function. We use stochastic gradient ascent with learning rate $\eta=1\times10^{-3}$ to train the network. The value of other hyperparameters are $\gamma=0.99$, and batch size $N=36$. The network is trained for $1000$ epochs. 

\color{black}
Also, since no optimal baseline is known, we compare the model free algorithm with the Deep Q-Network (DQN) algorithm \citep{mnih2015human} and the Policy Gradient algorithm \cite{williams1992simple}. Reward for both DQN algorithm and the Policy Gradient algorithm at time $\tau$ is taken as the value $C_2(\tau)$. For DQN, the neural network consists of two fully connected hidden layers with $100$ units each with ReLU activation and one output layer with linear activation. We use Adam optimizer with learning rate $0.01$ to optimize the DQN network. The batch size is $64$ and the network is trained for $1500$ episodes. For the Policy Gradient algorithm, we choose a single hidden layer of $200$ neurons, with learning rate $1\times10^{-5}$. Similar to the implementation of the Algorithm \ref{alg:model_free_PG}, we select batch size $N=36$ and train the network for $1000$ epochs.
\color{black}

{\bf $\alpha$-fairness Simulations Results}
The results for $\alpha$-fairness are provided in Figure \ref{fig:final_fig_alpha}. {\color{black} Each policy is run $50$ times and median and we show the inter-quartile range for each policy.} As a baseline, we also consider a strategy that chooses a user in each time uniformly at random, this strategy is denoted as ``uniform random" (Figure \ref{fig:fig_alpha_JPG} )
\color{black}
We note that the DQN algorithm and the policy gradient algorithm are not able to outperform the uniform random policy while the proposed model free policy outperforms the uniform random policy. Further Figure \ref{fig:fig_alpha_standard} shows the detrimental effect of using incorrect gradients. The linear approximation has larger error for $\alpha=2$ joint objective than compared to the proportional fairness joint objective. The standard policy gradient with fairness as rewards now performs as worse as the DQN algorithm.
\color{black}

\begin{figure}[thbp]
	\begin{center}
	\subfigure[$K=2$]{ 
		\includegraphics[trim=0in .1in .4in .2in, clip, width=.45\textwidth]{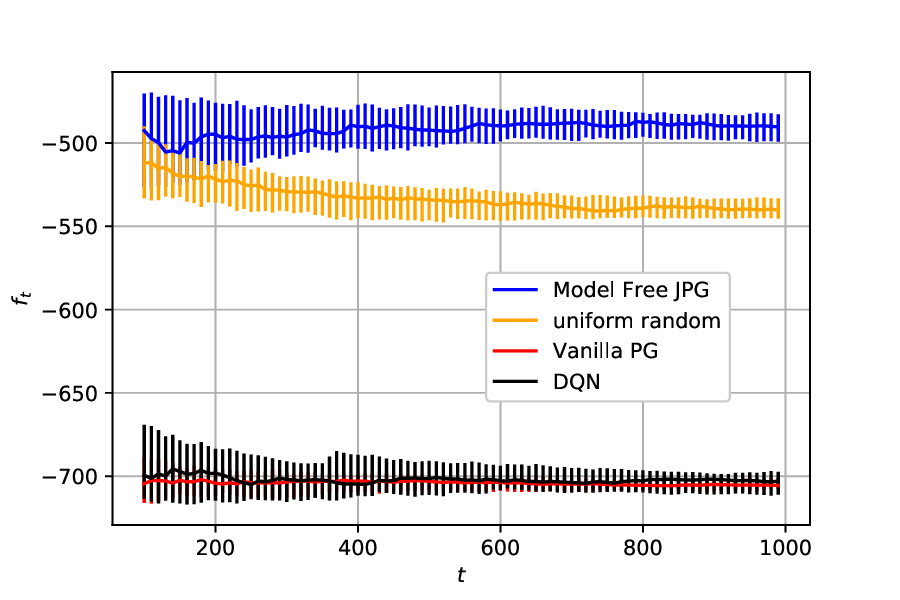}
		\label{fig:fig_alpha_JPG}}
	\subfigure[$K=2$ (standard RL algorithms)]{ 
		\includegraphics[trim=0in .1in .4in .2in, clip, width=.45\textwidth]{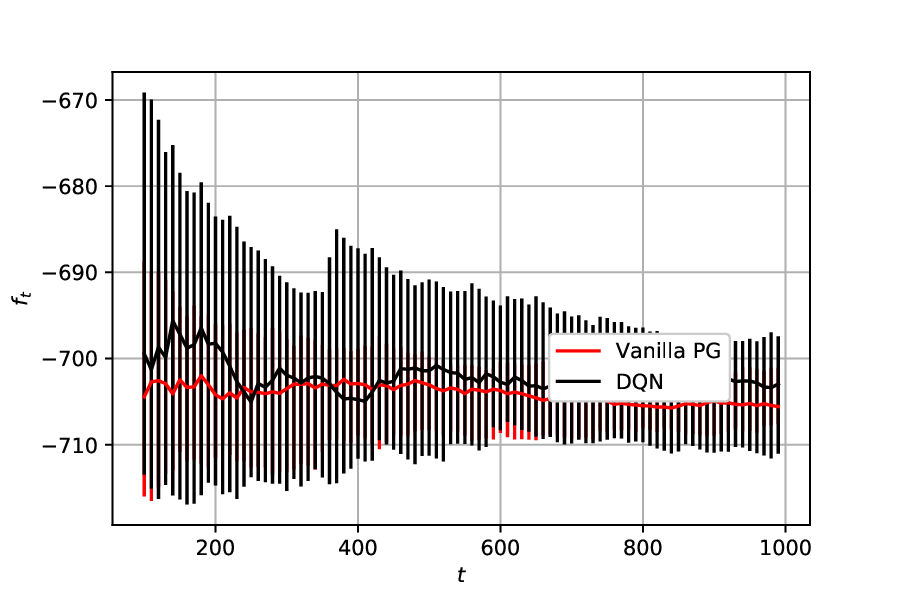}
		\label{fig:fig_alpha_standard}}
	\caption{\small Alpha Fairness for Cellular Scheduling, $f_t$ v/s $t$ (Best viewed in color)}
	\label{fig:final_fig_alpha}
\end{center}
\end{figure}

\subsection{Multiple Queues Latency Optimization}

We consider a problem, where multiple roads merge into a single lane, which is controlled by a digital sign in front of each road to indicate which road's vehicle proceeds next. This problem can be modeled as having $K$ queues with independent arrival patterns, where the arrival at queue $k\in \{1, \cdots, K\}$ is Bernoulli with an  arrival rate of $\lambda_i$. At each time, the user at the head of the selected one of out these $K$ queues is served. The problem is to determine which queue user is served at each time. Such problems also occur in processor scheduling systems, where multiple users send their requests to the server and the server decides which user's computation to do next \citep{haldar1991fairness}. 

In such a system, latency is of key concern to the users. The authors of \citep{ZhangE2E} demonstrated that the  effect of latency on the Quality of Experience (QoE) to the end user is a ``sigmoid-like'' function. Thus, we define the latency of a user $w$ as
\begin{align}
    QoE(w) = \frac{1-e^{-3}}{1+e^{(w-3)}}\label{eq:wait_time_qoe}.
\end{align}
Note that in Equation \eqref{eq:wait_time_qoe}, $QoE(w)$ remains close to $1$ for small wait times $(w\leq 1)$, and close to $0$ for high wait times $(w \geq 10)$. We let the service distribution of the queue be deterministic, where each user takes one unit of time for service. We model the problem as a non-linear multi-agent system. The different queues are the agents. The state is the queue lengths of the different queues. The action at each time is to determine which of the non-empty queue is selected. The reward of each agent at time $t$, $r_{k,t}$ is zero if queue $k$ is not selected at time $t$, and if the $QoE$ of the latency of the user served, if queue $k$ is selected at time $t$, where the latency of a user $w$ is the time spent by a user $w$ in the system (from entering the queue to being served). We again consider $\alpha$-fairness for $\alpha=2$, and proportional fairness for comparisons. %

\subsubsection{$\alpha=2$ Fairness}
We consider $K=8$ queues in our system. We let the arrival rate $\lambda_i$ in each queue be as given in  Table \ref{arrival_rate_table}. We also assume that each queue has bounded capacity of $100$, and the user is dropped if the queue is full. The overall reward function among different agents is chosen as $\alpha$-fairness, where $\alpha=2$. The objective can be written as
\begin{align}
C_{2}(T) = &   \max_{\{a_{k,t}\}_{K\times T}}\  \sum_{k=1}^K-1\left(\frac{1}{T}\sum_{t=1}^T a_{k,t}r_{k,t}\right)^{-1} \label{eq:lat_alpha_fair_eq}
\end{align}
\begin{table}[!htbp]
	\centering
\begin{tabular}{|c |c |c | c| c| c| c| c|} 
 \hline
 $\lambda_1$ & $\lambda_2$ & $\lambda_3$ & $\lambda_4$ & $\lambda_5$ & $\lambda_6$ & $\lambda_7$ &$\lambda_8$\\ [0.5ex] 
 \hline
 0.2& 0.1& 0.05& 0.25& 0.15& 0.21& 0.01& 0.3\\ [0.5ex] 
 \hline
\end{tabular}
\caption{Arrival rates $\lambda_{k}$ (in number of packets per step) for $\alpha=2$ fairness}
\label{arrival_rate_table}
\end{table}

\if 0
We consider the overall objective function as $\alpha$-fairness among the users over a certain period of time. We use $\alpha=2$, and the length of period $T=1000$. The objective can be written as
\begin{align}
    C_{2}(T) = &   \max_{\{a_{k,t}\}_{K\times T}}\  \sum_{k=1}^K-1\left(\frac{1}{T}\sum_{t=1}^T a_{k,t}r_{k,t}\right)^{-1} \label{eq:lat_alpha_fair_eq}
\end{align}
where $r_{k,t}$ is the $QoE$ of the corresponding to the wait time of the packet served at time $t$.
\fi

Since the number of states is large, we only evaluate  the model-free algorithm with $T=1000$. The gradient update equation for policy gradient algorithm as given in  Equation \eqref{eq:alpha_fair_grad} is used for $\alpha=2$.  Stochastic Gradient Ascent with learning rate $\eta=5\times10^{-3}$ is used to train the network. The value of discount factor $\gamma$ is set to $0.99$ and the batch size $N$ is kept as $30$.

We compare the proposed algorithm with the DQN Algorithm for Q-learning implementation. We use fairness at time $t$ or $C_2(t)$ as the reward for DQN network. The network consists of two fully connected hidden layers with $100$ units each,  ReLU activation function, and one output layer with linear activation. Adam optimizer with learning rate $0.01$ is used to optimize the network. The batch size is $64$ and the network is trained for $500$ episodes.

We also compare the proposed algorithm with the {\it Longest Queue First} (LQF) Algorithm, which serves the longest queue of the system. This algorithm doesn't require any learning, and has no hyperparameters.

We again run each policy for $50$ times and median and we show the inter-quartile range for each policy. The results for fairness maximization for this queuing system are provided in Figure (\ref{fig:final_fig_alpha_latency}). We note that the overall objective decreases  for all the policies. This is because the queue length is increasing and each packet has to wait for longer time on an average to be served till the queue becomes steady. At the end of the episode, the proposed policy gradient algorithm outperforms both the DQN and the LQF policy. We note that the during the start of the episode, LQF is more fair because the queues are almost empty, and serving the longest queue would decrease the latency of the longest queue. However, serving the longest queue is not optimal in the steady state. 

\begin{figure}[thbp]
\begin{center}
		\includegraphics[trim=0in .1in .4in .2in, clip,width=.48\textwidth]{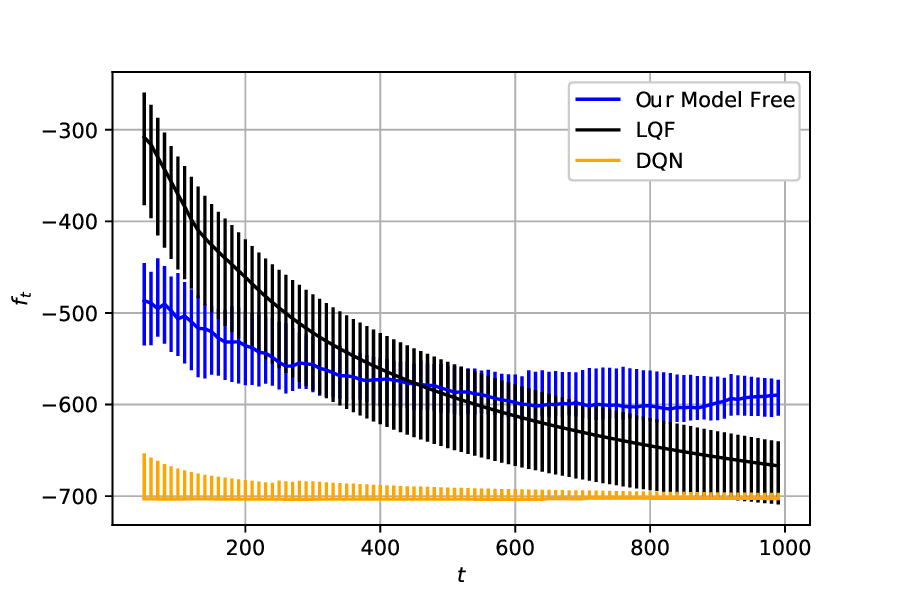}
	\caption{\small Alpha Fairness for the Queueing System, $f_t$ v/s $t$ (Best viewed in color)}
	\label{fig:final_fig_alpha_latency}
\end{center}
\end{figure}

\subsubsection{Proportional Fairness}

We also compare the performance of the policy learned by our algorithm with the policy learned by DQN algorithm for another synthetic example where we reduce the arrival rates in order to make the system less loaded.
We let the arrival rate $\lambda_i$ in each queue be as given in  Table \ref{low_arrival_rate_table}. We also assume that each queue has bounded capacity of $10$, and the user is dropped if the queue is full. The overall reward function among different agents is chosen as weighted-proportional fairness. The objective can be written as
\begin{align}
C(T) = &   \max_{\{a_{k,t}\}_{K\times T}}\  \sum_{k=1}^K w_k\log\left(\frac{1}{T}\sum_{t=1}^T a_{k,t}r_{k,t}\right) \label{eq:lat_prop_fair_eq}
\end{align}

The weights $w_k$ are given in Table \ref{weights_prop_fairness}. The weights were generated from a Normal distribution with mean $1$, and variance $0.1$. After sampling, the weights were normalized to make $\sum_k w_k = 1$.

\begin{table}[!htbp]
	\begin{center}
\begin{tabular}{|c |c |c | c| c| c| c| c|} 
 \hline
 $\lambda_1$ & $\lambda_2$ & $\lambda_3$ & $\lambda_4$ & $\lambda_5$ & $\lambda_6$ & $\lambda_7$ &$\lambda_8$\\ [0.5ex] 
 \hline
 0.014& 0.028& 0.042& 0.056& 0.069& 0.083& 0.097& 0.11\\ [0.5ex] 
 \hline
\end{tabular}
\caption{Arrival rates $\lambda_{k}$ (in number of packets per step) for proportional fairness}
\label{low_arrival_rate_table}
\end{center}
\end{table}

\begin{table}[!htbp]
	\begin{center}
\begin{tabular}{|c |c |c | c| c| c| c| c|} 
 \hline
 $w_1$ & $w_2$ & $w_3$ & $w_4$ & $w_5$ & $w_6$ & $w_7$ &$w_8$\\ [0.5ex] 
 \hline
 0.146& 0.112& 0.145& 0.119& 0.119& 0.123& 0.114& 0.122\\ [0.5ex] 
 \hline
\end{tabular}
\caption{Weights $w_{k}$ for weighted proportional fairness}
\label{weights_prop_fairness}
\end{center}
\end{table}

Again, we only evaluate  the model-free algorithm with $T=1000$. The gradient update equation for policy gradient algorithm as given in  Equation \eqref{eq:alpha_fair_grad} is used for $\alpha=1$.  Stochastic Gradient Ascent with learning rate $\eta=5\times10^{-1}$ is used to train the network. The value of discount factor $\gamma$ is set to $0.999$ and the batch size $N$ is kept as $80$.

We compare the proposed algorithm with the DQN Algorithm for Q-learning implementation. We use fairness at time $t$ or $C_2(t)$ as the reward for DQN network. The network consists of two fully connected hidden layers with $100$ units each,  ReLU activation function, and one output layer with linear activation. Adam optimizer with learning rate $0.01$ is used to optimize the network. The batch size is $64$ and the network is trained for $5000$ episodes.

The results for fairness maximization for this queuing system are provided in Figure \ref{fig:final_fig_weighted_prop_latency}. Similar to previous cases, we run each policy for $50$ times and median and we show the inter-quartile range for each policy. We note that compared to previous case, the objective increases as the episode progress. This is because the arrival rates are low, and queue lengths are short. For the entire episode, the proposed policy gradient algorithm outperforms the DQN policy. This is because of incorrect Q-learning (and DQN) cannot capture the non-linear functions of rewards, which is weighted proportional fairness in this case.

\begin{figure}[thbp]
\begin{center}
		\includegraphics[trim=0in .1in .4in .2in, clip,width=.48\textwidth]{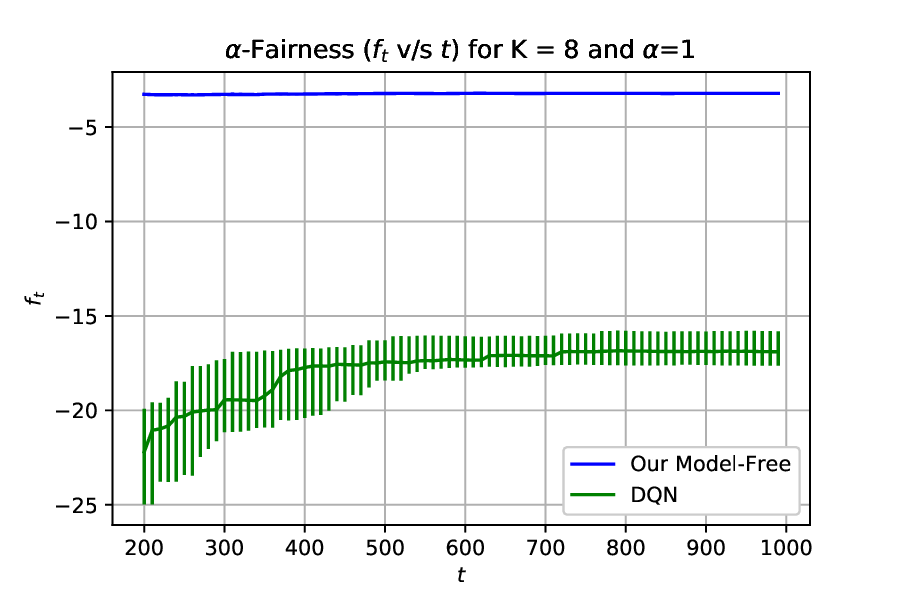}
	\caption{\small Weighted Proportional Fairness for the Queueing System, $f_t$ v/s $t$ (Best viewed in color)}
	\label{fig:final_fig_weighted_prop_latency}
	\end{center}
\end{figure}
\section{Conclusion} \label{conclusion}
This paper presents a novel average per step reward based formulation for optimizing joint objective function of long-term rewards of each objective for infinite horizon setup. In case of finite horizon, Markov policies may not be able to optimize the joint objective function, hence an average reward per step formulation is considered. A tabular model based algorithm which uses Dirichlet sampling to obtain regret bound of $\Tilde{O}\left(LKDS\sqrt{\frac{A}{T}}\right)$ for $K$ objectives scalarized using a $L$-Lipschitz concave function over a time horizon $T$ is provided where $S$ is the number of states and $D$ is the diameter of the underlying Markov Chain and $A$ is the number of actions available to the centralized controller. Further, a model free algorithm which can be efficiently implemented using neural networks is also proposed. The proposed algorithms outperform standard heuristics by a significant margin for maximizing  fairness in cellular scheduling problem, as well as for a multiple-queue queueing system.

Possible future works include modifying the framework to obtain actions from policies instead of probability values for infinite action space, and obtaining decentralized policies by introducing a message passing architecture. %
\clearpage
\bibliography{refs.bib,RL-ref.bib,refs_fairness.bib,Vaneet_cloud.bib,Vaneet_data.bib}  %

\begin{thebibliography}{63}
\providecommand{\natexlab}[1]{#1}
\providecommand{\url}[1]{\texttt{#1}}
\expandafter\ifx\csname urlstyle\endcsname\relax
  \providecommand{\doi}[1]{doi: #1}\else
  \providecommand{\doi}{doi: \begingroup \urlstyle{rm}\Url}\fi

\bibitem[Agarwal and Aggarwal(2019)]{code_nlmarl}
Mridul Agarwal and Vaneet Aggarwal.
\newblock {Source Code for Non-Linear Reinforcement Learning}.
\newblock \url{https://github.rcac.purdue.edu/Clan-labs/non-markov-RL}, 2019.

\bibitem[Aggarwal et~al.(2011)Aggarwal, Jana, Pang, Ramakrishnan, and
  Shankaranarayanan]{aggarwal2011characterizing}
Vaneet Aggarwal, Rittwik Jana, Jeffrey Pang, KK~Ramakrishnan, and
  NK~Shankaranarayanan.
\newblock Characterizing fairness for 3g wireless networks.
\newblock In \emph{2011 18th IEEE Workshop on Local \& Metropolitan Area
  Networks (LANMAN)}, pages 1--6. IEEE, 2011.

\bibitem[Agrawal and Jia(2017)]{agrawal2017optimistic}
Shipra Agrawal and Randy Jia.
\newblock Optimistic posterior sampling for reinforcement learning: worst-case
  regret bounds.
\newblock In \emph{Advances in Neural Information Processing Systems}, pages
  1184--1194, 2017.

\bibitem[Altman et~al.(2008)Altman, Avrachenkov, and
  Garnaev]{altman2008generalized}
Eitan Altman, Konstantin Avrachenkov, and Andrey Garnaev.
\newblock Generalized $\alpha$-fair resource allocation in wireless networks.
\newblock In \emph{2008 47th IEEE Conference on Decision and Control}, pages
  2414--2419. IEEE, 2008.

\bibitem[Ariyakhajorn et~al.(2006)Ariyakhajorn, Wannawilai, and
  Sathitwiriyawong]{ariyakhajorn2006comparative}
Jinthana Ariyakhajorn, Pattana Wannawilai, and Chanboon Sathitwiriyawong.
\newblock A comparative study of random waypoint and gauss-markov mobility
  models in the performance evaluation of manet.
\newblock In \emph{2006 International Symposium on Communications and
  Information Technologies}, pages 894--899. IEEE, 2006.

\bibitem[Bertsekas(1995)]{bertsekas1995dynamic}
Dimitri~P Bertsekas.
\newblock \emph{Dynamic programming and optimal control}, volume~1.
\newblock Athena scientific Belmont, MA, 1995.

\bibitem[Bloembergen et~al.(2015)Bloembergen, Tuyls, Hennes, and
  Kaisers]{bloembergen2015evolutionary}
Daan Bloembergen, Karl Tuyls, Daniel Hennes, and Michael Kaisers.
\newblock Evolutionary dynamics of multi-agent learning: A survey.
\newblock \emph{Journal of Artificial Intelligence Research}, 53:\penalty0
  659--697, 2015.

\bibitem[Bu et~al.(2006)Bu, Li, and Ramjee]{bu2006generalized}
T~Bu, L~Li, and R~Ramjee.
\newblock Generalized proportional fair scheduling in third generation wireless
  data networks.
\newblock In \emph{Proceedings IEEE INFOCOM 2006. 25TH IEEE International
  Conference on Computer Communications}, pages 1--12. IEEE, 2006.

\bibitem[Bubeck et~al.(2015)]{bubeck2015convex}
S{\'e}bastien Bubeck et~al.
\newblock Convex optimization: Algorithms and complexity.
\newblock \emph{Foundations and Trends{\textregistered} in Machine Learning},
  8\penalty0 (3-4):\penalty0 231--357, 2015.

\bibitem[Bu{\c{s}}oniu et~al.(2010)Bu{\c{s}}oniu, Babu{\v{s}}ka, and
  De~Schutter]{bucsoniu2010multi}
Lucian Bu{\c{s}}oniu, Robert Babu{\v{s}}ka, and Bart De~Schutter.
\newblock Multi-agent reinforcement learning: An overview.
\newblock In \emph{Innovations in multi-agent systems and applications-1},
  pages 183--221. Springer, 2010.

\bibitem[Castelletti et~al.(2013)Castelletti, Pianosi, and
  Restelli]{castelletti2013multiobjective}
A~Castelletti, Francesca Pianosi, and Marcello Restelli.
\newblock A multiobjective reinforcement learning approach to water resources
  systems operation: Pareto frontier approximation in a single run.
\newblock \emph{Water Resources Research}, 49\penalty0 (6):\penalty0
  3476--3486, 2013.

\bibitem[Diamond and Boyd(2016)]{diamond2016cvxpy}
Steven Diamond and Stephen Boyd.
\newblock Cvxpy: A python-embedded modeling language for convex optimization.
\newblock \emph{Journal of Machine Learning Research}, 17\penalty0
  (83):\penalty0 1--5, 2016.

\bibitem[Elgabli et~al.(2018)Elgabli, Aggarwal, Hao, Qian, and
  Sen]{elgabli2018lbp}
Anis Elgabli, Vaneet Aggarwal, Shuai Hao, Feng Qian, and Subhabrata Sen.
\newblock Lbp: Robust rate adaptation algorithm for svc video streaming.
\newblock \emph{IEEE/ACM Transactions on Networking}, 26\penalty0 (4):\penalty0
  1633--1645, 2018.

\bibitem[Garc{\i}a and Fern{\'a}ndez(2015)]{garcia2015comprehensive}
Javier Garc{\i}a and Fernando Fern{\'a}ndez.
\newblock A comprehensive survey on safe reinforcement learning.
\newblock \emph{Journal of Machine Learning Research}, 16\penalty0
  (1):\penalty0 1437--1480, 2015.

\bibitem[Haldar and Subramanian(1991)]{haldar1991fairness}
Sibsankar Haldar and DK~Subramanian.
\newblock Fairness in processor scheduling in time sharing systems.
\newblock \emph{ACM SIGOPS Operating Systems Review}, 25\penalty0 (1):\penalty0
  4--18, 1991.

\bibitem[Holma and Toskala(2005)]{holma2005wcdma}
Harri Holma and Antti Toskala.
\newblock \emph{WCDMA for UMTS.: Radio Access for Third Generation Mobile
  Communications}.
\newblock john wiley \& sons, 2005.

\bibitem[Hutter(2014)]{hutter2014extreme}
Marcus Hutter.
\newblock Extreme state aggregation beyond mdps.
\newblock In \emph{International Conference on Algorithmic Learning Theory},
  pages 185--199. Springer, 2014.

\bibitem[Ibrahim et~al.(2010)Ibrahim, Jin, Lu, Wu, He, and Qi]{ibrahim2010leen}
Shadi Ibrahim, Hai Jin, Lu~Lu, Song Wu, Bingsheng He, and Li~Qi.
\newblock Leen: Locality/fairness-aware key partitioning for mapreduce in the
  cloud.
\newblock In \emph{2010 IEEE Second International Conference on Cloud Computing
  Technology and Science}, pages 17--24. IEEE, 2010.

\bibitem[Jaksch et~al.(2010)Jaksch, Ortner, and Auer]{jaksch2010near}
Thomas Jaksch, Ronald Ortner, and Peter Auer.
\newblock Near-optimal regret bounds for reinforcement learning.
\newblock \emph{Journal of Machine Learning Research}, 11\penalty0
  (Apr):\penalty0 1563--1600, 2010.

\bibitem[Jiang and Lu(2019)]{jiang2019fen}
Jiechuan Jiang and Zongqing Lu.
\newblock Learning fairness in multi-agent systems.
\newblock In H.~Wallach, H.~Larochelle, A.~Beygelzimer, F.~d\textquotesingle
  Alch\'{e}-Buc, E.~Fox, and R.~Garnett, editors, \emph{Advances in Neural
  Information Processing Systems}, volume~32. Curran Associates, Inc., 2019.

\bibitem[Jin et~al.(2018)Jin, Allen-Zhu, Bubeck, and Jordan]{jin2018q}
Chi Jin, Zeyuan Allen-Zhu, Sebastien Bubeck, and Michael~I Jordan.
\newblock Is q-learning provably efficient?
\newblock In \emph{Advances in Neural Information Processing Systems}, pages
  4863--4873, 2018.

\bibitem[Kwak et~al.(2012)Kwak, Varakantham, Maheswaran, Tambe, Jazizadeh,
  Kavulya, Klein, Becerik-Gerber, Hayes, and Wood]{kwak2012saves}
Jun-young Kwak, Pradeep Varakantham, Rajiv Maheswaran, Milind Tambe, Farrokh
  Jazizadeh, Geoffrey Kavulya, Laura Klein, Burcin Becerik-Gerber, Timothy
  Hayes, and Wendy Wood.
\newblock Saves: A sustainable multiagent application to conserve building
  energy considering occupants.
\newblock In \emph{Proceedings of the 11th International Conference on
  Autonomous Agents and Multiagent Systems-Volume 1}, pages 21--28, 2012.

\bibitem[Kwan et~al.(2009)Kwan, Leung, and Zhang]{kwan2009proportional}
Raymond Kwan, Cyril Leung, and Jie Zhang.
\newblock Proportional fair multiuser scheduling in lte.
\newblock \emph{IEEE Signal Processing Letters}, 16\penalty0 (6):\penalty0
  461--464, 2009.

\bibitem[Lan et~al.(2010)Lan, Kao, Chiang, and Sabharwal]{lan2010axiomatic}
Tian Lan, David Kao, Mung Chiang, and Ashutosh Sabharwal.
\newblock \emph{An axiomatic theory of fairness in network resource
  allocation}.
\newblock IEEE, 2010.

\bibitem[Lawler(2018)]{lawler2018introduction}
Gregory~F Lawler.
\newblock \emph{Introduction to stochastic processes}.
\newblock Chapman and Hall/CRC, 2018.

\bibitem[Levin and Peres(2017)]{levin2017markov}
David~A Levin and Yuval Peres.
\newblock \emph{Markov chains and mixing times}, volume 107.
\newblock American Mathematical Soc., 2017.

\bibitem[Li et~al.(2006)Li, Walsh, and Littman]{li2006towards}
Lihong Li, Thomas~J Walsh, and Michael~L Littman.
\newblock Towards a unified theory of state abstraction for mdps.
\newblock \emph{ISAIM}, 4:\penalty0 5, 2006.

\bibitem[Li et~al.(2018)Li, Shankaran, Orgun, Fang, and Xu]{li2018resource}
Xiaoshuai Li, Rajan Shankaran, Mehmet~A Orgun, Gengfa Fang, and Yubin Xu.
\newblock Resource allocation for underlay d2d communication with proportional
  fairness.
\newblock \emph{IEEE Transactions on Vehicular Technology}, 67\penalty0
  (7):\penalty0 6244--6258, 2018.

\bibitem[Lillicrap et~al.(2015)Lillicrap, Hunt, Pritzel, Heess, Erez, Tassa,
  Silver, and Wierstra]{lillicrap2015continuous}
Timothy~P Lillicrap, Jonathan~J Hunt, Alexander Pritzel, Nicolas Heess, Tom
  Erez, Yuval Tassa, David Silver, and Daan Wierstra.
\newblock Continuous control with deep reinforcement learning.
\newblock \emph{arXiv preprint arXiv:1509.02971}, 2015.

\bibitem[Littman(1994)]{littman1994markov}
Michael~L Littman.
\newblock Markov games as a framework for multi-agent reinforcement learning.
\newblock In \emph{Machine learning proceedings 1994}, pages 157--163.
  Elsevier, 1994.

\bibitem[Liu et~al.(2014)Liu, Xu, and Hu]{liu2014multiobjective}
Chunming Liu, Xin Xu, and Dewen Hu.
\newblock Multiobjective reinforcement learning: A comprehensive overview.
\newblock \emph{IEEE Transactions on Systems, Man, and Cybernetics: Systems},
  45\penalty0 (3):\penalty0 385--398, 2014.

\bibitem[Majeed and Hutter(2018)]{majeed2018q}
Sultan~Javed Majeed and Marcus Hutter.
\newblock On q-learning convergence for non-markov decision processes.
\newblock In \emph{IJCAI}, pages 2546--2552, 2018.

\bibitem[Margolies et~al.(2016)Margolies, Sridharan, Aggarwal, Jana,
  Shankaranarayanan, Vaishampayan, and Zussman]{margolies2016exploiting}
Robert Margolies, Ashwin Sridharan, Vaneet Aggarwal, Rittwik Jana,
  NK~Shankaranarayanan, Vinay~A Vaishampayan, and Gil Zussman.
\newblock Exploiting mobility in proportional fair cellular scheduling:
  Measurements and algorithms.
\newblock \emph{IEEE/ACM Transactions on Networking (TON)}, 24\penalty0
  (1):\penalty0 355--367, 2016.

\bibitem[McCallum(1995)]{mccallum1995instance}
R~Andrew McCallum.
\newblock Instance-based utile distinctions for reinforcement learning with
  hidden state.
\newblock In \emph{Machine Learning Proceedings 1995}, pages 387--395.
  Elsevier, 1995.

\bibitem[Mnih et~al.(2015)Mnih, Kavukcuoglu, Silver, Rusu, Veness, Bellemare,
  Graves, Riedmiller, Fidjeland, Ostrovski, et~al.]{mnih2015human}
Volodymyr Mnih, Koray Kavukcuoglu, David Silver, Andrei~A Rusu, Joel Veness,
  Marc~G Bellemare, Alex Graves, Martin Riedmiller, Andreas~K Fidjeland, Georg
  Ostrovski, et~al.
\newblock Human-level control through deep reinforcement learning.
\newblock \emph{Nature}, 518\penalty0 (7540):\penalty0 529, 2015.

\bibitem[Nesterov(2003)]{nesterov2018lectures}
Yurii Nesterov.
\newblock \emph{Introductory lectures on convex optimization: A basic course},
  volume~87.
\newblock Springer Science \& Business Media, 2003.

\bibitem[Nguyen et~al.(2020)Nguyen, Nguyen, Vamplew, Nahavandi, Dazeley, and
  Lim]{nguyen2020multi}
Thanh~Thi Nguyen, Ngoc~Duy Nguyen, Peter Vamplew, Saeid Nahavandi, Richard
  Dazeley, and Chee~Peng Lim.
\newblock A multi-objective deep reinforcement learning framework.
\newblock \emph{Engineering Applications of Artificial Intelligence},
  96:\penalty0 103915, 2020.

\bibitem[Ono and Fukumoto(1996)]{ono1996multi}
Norihiko Ono and Kenji Fukumoto.
\newblock Multi-agent reinforcement learning: A modular approach.
\newblock In \emph{Second International Conference on Multiagent Systems},
  pages 252--258, 1996.

\bibitem[Osband et~al.(2013)Osband, Russo, and Van~Roy]{osband2013more}
Ian Osband, Daniel Russo, and Benjamin Van~Roy.
\newblock (more) efficient reinforcement learning via posterior sampling.
\newblock In \emph{Advances in Neural Information Processing Systems}, pages
  3003--3011, 2013.

\bibitem[Perez et~al.(2009)Perez, Germain-Renaud, K{\'e}gl, and
  Loomis]{perez2009responsive}
Julien Perez, C{\'e}cile Germain-Renaud, Bal{\'a}zs K{\'e}gl, and Charles
  Loomis.
\newblock Responsive elastic computing.
\newblock In \emph{Proceedings of the 6th international conference industry
  session on Grids meets autonomic computing}, pages 55--64. ACM, 2009.

\bibitem[Pratt(1964)]{10.2307/1913738}
John~W. Pratt.
\newblock Risk aversion in the small and in the large.
\newblock \emph{Econometrica}, 32\penalty0 (1/2):\penalty0 122--136, 1964.
\newblock ISSN 00129682, 14680262.
\newblock URL \url{http://www.jstor.org/stable/1913738}.

\bibitem[Puterman(1994)]{puterman1994markov}
Martin~L. Puterman.
\newblock \emph{Markov Decision Processes: Discrete Stochastic Dynamic
  Programming}.
\newblock John Wiley \& Sons, Inc., New York, NY, USA, 1st edition, 1994.
\newblock ISBN 0471619779.

\bibitem[Roijers et~al.(2013)Roijers, Vamplew, Whiteson, and
  Dazeley]{survey_MOMDP}
Diederik~M. Roijers, Peter Vamplew, Shimon Whiteson, and Richard Dazeley.
\newblock A survey of multi-objective sequential decision-making.
\newblock \emph{J. Artif. Int. Res.}, 48\penalty0 (1):\penalty0 67–113,
  October 2013.
\newblock ISSN 1076-9757.

\bibitem[Schulman et~al.(2015)Schulman, Levine, Abbeel, Jordan, and
  Moritz]{schulman2015trust}
John Schulman, Sergey Levine, Pieter Abbeel, Michael Jordan, and Philipp
  Moritz.
\newblock Trust region policy optimization.
\newblock In \emph{International conference on machine learning}, pages
  1889--1897, 2015.

\bibitem[Schulman et~al.(2017)Schulman, Wolski, Dhariwal, Radford, and
  Klimov]{schulman2017proximal}
John Schulman, Filip Wolski, Prafulla Dhariwal, Alec Radford, and Oleg Klimov.
\newblock Proximal policy optimization algorithms.
\newblock \emph{arXiv preprint arXiv:1707.06347}, 2017.

\bibitem[Sener and Koltun(2018)]{sener2018multi}
Ozan Sener and Vladlen Koltun.
\newblock Multi-task learning as multi-objective optimization.
\newblock \emph{arXiv preprint arXiv:1810.04650}, 2018.

\bibitem[Shalev-Shwartz et~al.(2016)Shalev-Shwartz, Shammah, and
  Shashua]{shalev2016safe}
Shai Shalev-Shwartz, Shaked Shammah, and Amnon Shashua.
\newblock Safe, multi-agent, reinforcement learning for autonomous driving.
\newblock \emph{arXiv preprint arXiv:1610.03295}, 2016.

\bibitem[Shoham et~al.(2003)Shoham, Powers, and Grenager]{shoham2003multi}
Yoav Shoham, Rob Powers, and Trond Grenager.
\newblock Multi-agent reinforcement learning: a critical survey.
\newblock \emph{Web manuscript}, 2003.

\bibitem[Sutton and Barto(2018{\natexlab{a}})]{sutton1998introduction}
Richard~S Sutton and Andrew~G Barto.
\newblock \emph{Reinforcement learning: An introduction}.
\newblock MIT press, 2018{\natexlab{a}}.

\bibitem[Sutton and Barto(2018{\natexlab{b}})]{sutton2018reinforcement}
Richard~S Sutton and Andrew~G Barto.
\newblock \emph{Reinforcement learning: An introduction}.
\newblock MIT press, 2018{\natexlab{b}}.

\bibitem[Sutton et~al.(2000)Sutton, McAllester, Singh, and
  Mansour]{sutton2000policy}
Richard~S Sutton, David~A McAllester, Satinder~P Singh, and Yishay Mansour.
\newblock Policy gradient methods for reinforcement learning with function
  approximation.
\newblock In \emph{Advances in neural information processing systems}, pages
  1057--1063, 2000.

\bibitem[Tan(1993)]{tan1993multi}
Ming Tan.
\newblock Multi-agent reinforcement learning: Independent vs. cooperative
  agents.
\newblock In \emph{Proceedings of the tenth international conference on machine
  learning}, pages 330--337, 1993.

\bibitem[Thi{\'e}baux et~al.(2006)Thi{\'e}baux, Gretton, Slaney, Price, and
  Kabanza]{thiebaux2006decision}
Sylvie Thi{\'e}baux, Charles Gretton, John Slaney, David Price, and Froduald
  Kabanza.
\newblock Decision-theoretic planning with non-markovian rewards.
\newblock \emph{Journal of Artificial Intelligence Research}, 25:\penalty0
  17--74, 2006.

\bibitem[Van~Moffaert and Now{\'e}(2014)]{van2014multi}
Kristof Van~Moffaert and Ann Now{\'e}.
\newblock Multi-objective reinforcement learning using sets of pareto
  dominating policies.
\newblock \emph{The Journal of Machine Learning Research}, 15\penalty0
  (1):\penalty0 3483--3512, 2014.

\bibitem[{Viswanath} et~al.(2002){Viswanath}, {Tse}, and {Laroia}]{1003822}
P.~{Viswanath}, D.~N.~C. {Tse}, and R.~{Laroia}.
\newblock Opportunistic beamforming using dumb antennas.
\newblock \emph{IEEE Transactions on Information Theory}, 48\penalty0
  (6):\penalty0 1277--1294, June 2002.
\newblock ISSN 0018-9448.
\newblock \doi{10.1109/TIT.2002.1003822}.

\bibitem[Wang et~al.(2014)Wang, Li, and Liang]{wang2014dominant}
Wei Wang, Baochun Li, and Ben Liang.
\newblock Dominant resource fairness in cloud computing systems with
  heterogeneous servers.
\newblock In \emph{IEEE INFOCOM 2014-IEEE Conference on Computer
  Communications}, pages 583--591. IEEE, 2014.

\bibitem[Wang et~al.(2015)Wang, Schaul, Hessel, Van~Hasselt, Lanctot, and
  De~Freitas]{wang2015dueling}
Ziyu Wang, Tom Schaul, Matteo Hessel, Hado Van~Hasselt, Marc Lanctot, and Nando
  De~Freitas.
\newblock Dueling network architectures for deep reinforcement learning.
\newblock \emph{arXiv preprint arXiv:1511.06581}, 2015.

\bibitem[Weissman et~al.(2003)Weissman, Ordentlich, Seroussi, Verdu, and
  Weinberger]{weissman2003inequalities}
Tsachy Weissman, Erik Ordentlich, Gadiel Seroussi, Sergio Verdu, and Marcelo~J
  Weinberger.
\newblock Inequalities for the l1 deviation of the empirical distribution.
\newblock 2003.

\bibitem[Williams(1992)]{williams1992simple}
Ronald~J Williams.
\newblock Simple statistical gradient-following algorithms for connectionist
  reinforcement learning.
\newblock \emph{Machine learning}, 8\penalty0 (3-4):\penalty0 229--256, 1992.

\bibitem[Xiang et~al.(2015)Xiang, Lan, Aggarwal, and Chen]{xiang2015joint}
Yu~Xiang, Tian Lan, Vaneet Aggarwal, and Yih-Farn~R Chen.
\newblock Joint latency and cost optimization for erasure-coded data center
  storage.
\newblock \emph{IEEE/ACM Transactions on Networking}, 24\penalty0 (4):\penalty0
  2443--2457, 2015.

\bibitem[Zhang and Shah(2014)]{zhang2014fairness}
Chongjie Zhang and Julie~A Shah.
\newblock Fairness in multi-agent sequential decision-making.
\newblock In \emph{Advances in Neural Information Processing Systems}, pages
  2636--2644, 2014.

\bibitem[Zhang and Shah(2015)]{zhang2015fairness}
Chongjie Zhang and Julie~A Shah.
\newblock On fairness in decision-making under uncertainty: Definitions,
  computation, and comparison.
\newblock In \emph{Twenty-Ninth AAAI Conference on Artificial Intelligence},
  2015.

\bibitem[Zhang et~al.(2019)Zhang, Sen, Kurniawan, Gunawi, and Jiang]{ZhangE2E}
Xu~Zhang, Siddhartha Sen, Daniar Kurniawan, Haryadi Gunawi, and Junchen Jiang.
\newblock E2e: Embracing user heterogeneity to improve quality of experience on
  the web.
\newblock In \emph{Proceedings of the ACM Special Interest Group on Data
  Communication}, SIGCOMM '19, pages 289--302, New York, NY, USA, 2019. ACM.
\newblock ISBN 978-1-4503-5956-6.
\newblock \doi{10.1145/3341302.3342089}.
\newblock URL \url{http://doi.acm.org/10.1145/3341302.3342089}.

\end{thebibliography}
\newpage
\appendix
\section{Proof of Auxiliary Lemmas}
\subsection{Bounding the bias span for any MDP for any policy}
\begin{lemma}[Bounded Span of MDP]\label{lem:bounded_v_span}
For an MDP $\mathcal{M}$ with rewards $r^k(s,a)$ and transition probabilities $P$, for any stationary policy $\pi$ with average reward $\lambda_\pi^k$, the difference of bias of any two states $s$, and $s'$, is upper bounded by the diameter of the MDP $D$ as:
\begin{align}
    h_{\pi}^{P,k}(s) - h_{\pi}^{P,k}(s') \leq D~ \forall~s,s'\in \mathcal{S}.
\end{align}
\end{lemma}
\begin{proof}
Consider two states $s, s'\in \mathcal{S}$. Also, let $\tau$ be a random variable defined as:
\begin{align}
    \tau = \min\{t\geq 1: S_t = s', S_1 = s\}
\end{align}
Then, for any policy $\pi$, we have the following Bellman operator
\begin{align}
    h_{\pi}^{P,k}(s) &= r_{\pi}^k(s) - \lambda_\pi^k + <P_\pi(\cdot|s), h_{\pi}^{P,k}>\nonumber\\
    &= Th_{\pi}^{P,k}(s)(s)
\end{align}
where $P_\pi(s'|s) = \sum_{a}\pi(a|s)P(s'|s,a)$ and $r_\pi^k(s) = \sum_{a}\pi(a|s)r^k(s,a)$.

We also define another operator, 
\begin{align}
\bar{T}h(s)=
\begin{cases}
\min_{s,a}r^k(s,a) - \lambda_\pi^k + <P_\pi(\cdot|s), h>, &s\neq s'\\
h_{\pi}^{P,k}(s'), &s=s'
\end{cases}
\end{align}

Note that $(T - \bar{T})h_{\pi}^{P,k}(s) = r_\pi^k(s,a) - \min_{s,a}r^k(s,a) \ge 0$, for all $s$. Hence, we have $\bar{T}h(s) \le Th(s) = h_{\pi}^{P,k}(s)$, for all $s$. Further, for any two vectors $u, v$, where all the elements of $u$ are not smaller than $w$ we have $\bar{T}u \ge \bar{T}w$. Hence, we have $\bar{T}^nh_{\pi}^{P,k}(s) \le h_{\pi}^{P,k}(s)$ for all $s$. Unrolling the recurrence, we have
\begin{align}
    h_{\pi}^{P,k}(s) \ge \bar{T}^nh_{\pi}^{P,k}(s) = \mathbb{E}\left[-(\lambda_\pi^k - \min_{s,a}r^k(s,a))(n\wedge\tau) + h_{\pi}^{P,k}(S_{n\wedge\tau})\right]
\end{align}
For $\lim n\to \infty$, we have $h_{\pi}^{P,k}(s) \ge h_{\pi}^{P,k}(s') - D$, completing the proof.
\end{proof}

\subsection{Bounding the bias span for MDP with transition probabilities $P_e^k$}
\begin{lemma}[Bounded Span of optimal MDP]\label{lem:bounded_v_span_of_optimal_MDP}
For a MDP with rewards $r^k(s,a)$ and transition probabilities $P_e^k\in\mathcal{P}_{t_e}$, for policy $\pi_e$, the difference of bias of any two states $s$, and $s'$, is upper bounded by the diameter of the true MDP $D$ as:
\begin{align}
    h_{\pi_e}^{P_e^k,k}(s) - h_{\pi_e}^{P_e^k,k}(s') \leq D~ \forall~s,s'\in \mathcal{S}.
\end{align}
\end{lemma}
\begin{proof}
Note that $\lambda_{\pi_e}^{P_e^k,k} \ge \lambda_{\pi_e}^{P',k}$ for all $P'\in\mathcal{P}_{t_e}$. Now, consider the following Bellman equation:
\begin{align}
    h_{\pi_e}^{P_e^k,k}(s) &= r_{\pi_e}^k(s,a) - \lambda_{\pi_e}^{P_e^k,k} + <P_{\pi_e,e}^k(\cdot|s), h_{\pi_e}^{P_e^k,k}>\nonumber\\
    &= Th_{\pi_e}^{P_e^k,k}(s)
\end{align}
where $r_{\pi_e}(s) = \sum_{a}\pi_e(a|s)r^k(s,a)$ and $P_{\pi_e,e}^k(s'|s) = \sum_{a}\pi(a|s)P_e^k(s'|s,a)$.

Consider two states $s, s'\in \mathcal{S}$. Also, let $\tau$ be a random variable defined as:
\begin{align}
    \tau = \min\{t\geq 1: S_t = s', S_1 = s\}
\end{align}

We also define another operator, 
\begin{align}
\bar{T}h(s)=
\begin{cases}
\min_{s,a}r^k(s,a) - \lambda_{\pi_e}^{P_e^k,k} + <P_{\pi_e}(\cdot|s), h>, &s\neq s'\\
h_{\pi_e}^{P_e^k,k}(s'), &s=s'
\end{cases}
\end{align}
where $P_{\pi_e}(\cdot|s) = \sum_{a}\pi_e(a|s)P(s'|s,a)$.

Now, for any $s\in\mathcal{S}$, note that
\begin{align}
    h(s) &= Th(s)\\
    &=\max_{P'\in\mathcal{P}_{t_e}}\left(r_{\pi_e}^k(s) -\lambda_{\pi_e}^{P_e^k,k} + <P_{\pi_e}', h>\right)\\
    &\ge r_{\pi_e}^k(s) -\lambda_{\pi_e}^{P_e^k,k} + <P_{\pi_e}, h>\\
    &\ge \min_{s,a}r^k(s,a) -\lambda_{\pi_e}^{P_e^k,k} + <P_{\pi_e}, h>\\
    &= \bar{T}h(s)
\end{align}
Further, for any two vectors $u, v$, where all the elements of $u$ are not smaller than $w$ we have $\bar{T}u \ge \bar{T}w$. Hence, we have $\bar{T}^nh_{\pi}^{P,k}(s) \le h_{\pi}^{P,k}(s)$ for all $s$. Unrolling the recurrence, we have
\begin{align}
    h_{\pi}^{P_e^k,k}(s) \ge \bar{T}^nh_{\pi}^{P_e^k,k}(s) = \mathbb{E}\left[-(\lambda_\pi^k - \min_{s,a}r^k(s,a))(n\wedge\tau) + h_{\pi}^{P_e^k,k}(S_{n\wedge\tau})\right]
\end{align}
For $\lim n\to \infty$, we have $h_{\pi}^{P_e^k,k}(s) \ge h_{\pi}^{P_e^k,k}(s') - D$, completing the proof.
\end{proof}

\newpage
\section{Proof of Lemmas from main text}
\subsection{Proof of Lemma \ref{lem:bound_average_by_bellman}}\label{app:average_error_is_bellman_error}
\begin{proof}
Note that for all $s\in\mathcal{S}$, we have:
\begin{align}
    V_{\gamma,\pi}^{\Tilde{P},k}(s) &= \mathbb{E}_{a\sim\pi}\left[Q_{\gamma,\pi}^{\Tilde{P},k}(s,a)\right]\\
    &= \mathbb{E}_{a\sim\pi}\left[B_{\gamma,\pi}^{\Tilde{P},k}(s,a) + r(s,a) + \gamma\sum_{s'\in\mathcal{S}}P(s'|s,a)V_{\gamma\pi}^{\Tilde{P},k}(s')\right]\label{eq:optimistic_MDP_lambda}
\end{align}
where Equation \eqref{eq:optimistic_MDP_lambda} follows from the definition of the Bellman error for state action pair $s,a$.

Similarly, for the true MDP, we have,
\begin{align}
    V_{\gamma,\pi}^{P,k}(s) &= \mathbb{E}_{a\sim\pi}\left[Q_{\gamma,\pi}^{P,k}(s,a)\right]\\
    &= \mathbb{E}_{a\sim\pi}\left[r(s,a)+ \gamma\sum_{s'\in\mathcal{S}}P(s'|s,a)V_{\gamma,\pi}^{P,k}(s')\right] \label{eq:true_MDP_lambda}
\end{align}

Subtracting Equation \eqref{eq:true_MDP_lambda} from Equation \eqref{eq:optimistic_MDP_lambda}, we get:
\begin{align}
V_{\gamma,\pi}^{\Tilde{P},k}(s) - V_{\gamma,\pi}^{P,k}(s) &= \mathbb{E}_{a\sim\pi}\left[B_{\gamma,\pi}^{ \Tilde{P},k}(s,a) + \gamma\sum_{s'\in\mathcal{S}}P(s'|s,a)\left(V_{\gamma,\pi}^{\Tilde{P},k} - V_{\gamma,\pi}^{\Tilde{P},k}\right)(s')\right]\\
&= \mathbb{E}_{a\sim\pi}\left[B_{\gamma,\pi}^{\Tilde{P},k}(s,a)\right] + \gamma\sum_{s'\in\mathcal{S}}P_{\pi}\left(V_{\gamma,\pi}^{\Tilde{P},k} - V_{\gamma,\pi}^{\Tilde{P},k}\right)(s')
\end{align}

Using the vector format for the value functions, we have,
\begin{align}
    \Bar{V}_\gamma^{\pi,\Tilde{P}} - \Bar{V}_\gamma^{\pi,P} &= \left(I-\gamma P_{\pi}\right)^{-1}\mathbb{E}_{a\sim\pi}\left[B_{\gamma,\pi}^{\Tilde{P},k}(s,a)\right]%
\end{align}
Now, converting the value function to average per-step reward we have,
\begin{align}
    \lambda_{\pi}^{\Tilde{P},k}\bm{1}_S - \lambda_{P,\pi}^k\bm{1}_S &= \lim_{\gamma\to1}(1-\gamma)\left(\Bar{V}_{\gamma,\pi}^{\Tilde{P},k} - \Bar{V}_{\gamma,\pi}^{P,k}\right)\\
    &= \lim_{\gamma\to1}(1-\gamma)\left(I-\gamma P_{\pi}\right)^{-1}\mathbb{E}_{a\sim\pi}\left[B_{\gamma,\pi}^{\Tilde{P},k}(s,a)\right]\label{eq:Bellman_error_limit}\\
    &= \left(\sum_{s,a}d_{\pi}(s,a) B_{\pi}^{\Tilde{P},k}(s,a)\right)\bm{1}_S
\end{align}
where the last equation follows from the definition of occupancy measures by \cite{puterman1994markov}, and 
the existence of the limit $\lim_{\gamma\to1}B_{\gamma,\pi}^{\Tilde{P},k}$ in Equation \eqref{eq:Bellman_error_limit} from Equation \eqref{eq:value_to_bias}.

\end{proof}
\subsection{Proof of Lemma \ref{lem:bound_bellman_s_a_main}}\label{app:bounding_bellman_error}
\begin{proof}
Starting with the definition of Bellman error in Equation \eqref{eq:Bellman_error_definition}, we get
\begin{align}
B_{\pi}^{\Tilde{P},k}(s,a) &= \lim_{\gamma\to1}\left(Q_{\gamma,\pi}^{\Tilde{P},k}(s,a) - \left(r(s,a) +\gamma \sum_{s'\in\mathcal{S}}P(s'|s,a)V_{\gamma,\pi}^{\Tilde{P},k}(s') \right)\right)\\
&=\lim_{\gamma\to1}\left(\left(r(s,a) + \gamma\sum_{s'\in\mathcal{S}} \Tilde{P}(s'|s,a)V_{\gamma,\pi}^{\Tilde{P},k}(s')\right) - \left(r(s,a) +\gamma \sum_{s'\in\mathcal{S}}P(s'|s,a)V_{\gamma,\pi}^{\Tilde{P},k}(s') \right)\right)\\
&= \lim_{\gamma\to1}\gamma\sum_{s'\in\mathcal{S}}\left( \Tilde{P}(s'|s,a) - P(s'|s,a)\right)V_{\gamma,\pi}^{\Tilde{P},k}(s')\label{eq:rewards_known}\\
&= \lim_{\gamma\to1}\gamma\left(\sum_{s'\in\mathcal{S}}\left( \Tilde{P}(s'|s,a) - P(s'|s,a)\right)V_{\gamma,\pi}^{\Tilde{P},k}(s') + V_{\gamma,\pi}^{\Tilde{P},k}(s) - V_{\gamma,\pi}^{\Tilde{P},k}(s)\right)\\
&= \lim_{\gamma\to1}\gamma\Bigg(\sum_{s'\in\mathcal{S}}\left( \Tilde{P}(s'|s,a) - P(s'|s,a)\right)V_{\gamma,\pi}^{\Tilde{P},k}(s') - \sum_{s'\in\mathcal{S}} \Tilde{P}(s'|s,a)V_{\gamma,\pi}^{\Tilde{P},k}(s)\nonumber\\
&~~~~~+ \sum_{s'\in\mathcal{S}} P(s'|s,a)V_{\gamma,\pi}^{\Tilde{P},k}(s)\Bigg)\\
&= \lim_{\gamma\to1}\gamma\left(\sum_{s'\in\mathcal{S}}\left( \Tilde{P}(s'|s,a) - P(s'|s,a)\right)\left(V_{\gamma,\pi}^{\Tilde{P},k}(s') - V_{\gamma,\pi}^{\Tilde{P},k}(s)\right)\right)\\
&= \left(\sum_{s'\in\mathcal{S}}\left( \Tilde{P}(s'|s,a) - P(s'|s,a)\right)\lim_{\gamma\to1}\gamma\left(V_{\gamma,\pi}^{\Tilde{P},k}(s') - V_{\gamma,\pi}^{\Tilde{P},k}(s)\right)\right)\label{eq:interchange_limit_and_expectation}\\
&= \left(\sum_{s'\in\mathcal{S}}\left( \Tilde{P}(s'|s,a) - P(s'|s,a)\right)h_{\pi}^{\Tilde{P},k}(s')\right)\label{eq:value_to_bias}\\
&\le \Big\|\Tilde{P}(\cdot|s,a) - P(\cdot|s,a)\Big\|_1\|h_{\pi}^{\Tilde{P},k}(\cdot)\|_\infty\label{eq:reward_holders}\\
&\le \epsilon_{s,a}\Tilde{D} \label{eq:bound_bias}
\end{align}
where Equation \eqref{eq:rewards_known} comes from the assumption that the rewards are known to the agent. Equation \eqref{eq:interchange_limit_and_expectation} follows from the fact that the difference between value function at two states is bounded. Equation \eqref{eq:value_to_bias} comes from the definition of bias term \cite{puterman1994markov} where $h$ is the bias of the policy $\pi$ when run on the sampled MDP. Equation \eqref{eq:reward_holders} follows from H\"{o}lder's inequality. In Equation \eqref{eq:bound_bias}, $\|h(\cdot)\|_\infty$ is bounded by the diameter $\Tilde{D}$ of the sampled MDP(from Lemma \ref{lem:bounded_v_span}). Also, the $\ell_1$ norm of probability vector difference is bounded from the definition.

Additionally, note that the $\ell_1$ norm in Equation \eqref{eq:reward_holders} is bounded by $2$. Thus the Bellman error is loose upper bounded by $2\Tilde{D}$ for all state-action pairs.
\end{proof}

\if 0
\begin{lemma}[Posterior Sampling Lemma 1 \cite{osband2013more}]
For any $\sigma(H_t)$-measurable function $g$, if $P$ follows distribution $\phi$, then for transition probabilities $\Tilde{P}$ sampled from $\phi$ we have,
\begin{align}
    \mathbb{E}\left[g(P)|\sigma(H_t)\right] = \mathbb{E}\left[g(\Tilde{P})|\sigma(H_t)\right]
\end{align}
\end{lemma}

\begin{lemma}[Proposition 18 \cite{jaksch2010near}]
The total number of epochs $E$ for the model-free IFRL Algorithm \ref{alg:model_based_algo} up to step $T \geq SA$ is upper bounded as
\begin{align}
    E \leq 1 + 2AS + AS \log_2\left(\frac{T}{SA}\right)
\end{align}
\end{lemma}

\subsection{Proof of Lemma \ref{lem:bound_on_episodes}}
\begin{proof}
We note that the epoch update condition follows identical to the epoch update condition of the UCRL algorithm by \citet{jaksch2010near}. Hence, we can use the proof of \citep[{Proposition 18}]{jaksch2010near} construct the proof if Lemma \ref{lem:bound_on_episodes}.
Let $N(s,a)$ be the number of times the controller visits state action pair $(s,a)$ till time $T$. Further,  let $\nu_{e}(s,a)$ be the number of times the controllor visits state action pair $(s,a)$ in epoch $e$. Let $N_{e}(s,a)$ be the total number of visitations to $(s,a)$ before epoch $e$ starts. Then, we have,
\begin{align}
    N_{e}(s,a) &= \sum_{e'=1}^{e}\nu_{e'}(s,a)
\end{align}

The controller triggers change of epoch if $\nu_{e}(s,a) = N_e(s,a)$. Let $E(s,a)$ be the number of new epochs triggered because of state action pair $(s,a)$, then we have
\begin{align}
    N_e(s,a) &= \sum_{e=1}^E\nu_{e}(s,a)\\
             &\geq 1 + \sum_{e:\nu_{e}(s,a) = N_k(s,a)}\nu_{i,k}(s,a)\\
             &\geq 1 + \sum_{e:\nu_{e}(s,a) = N_k(s,a)}N_{k}(s,a)\\
             &\geq 1 + \sum_{e=1}^{E(s,a)}2^{e-1}\\
             &= 2^{E(s,a)}
\end{align}
$T$ is the total duration for which the algorithm runs, and is equal to the total visitations to any state action pair. Hence, we have
\begin{align}
    T &= \sum_{s,a}N(s,a) \geq \sum_{s,a} \left(2^{E_i(s,a)} - 1\right)\\
       &\geq AS2^{\sum_i\sum_{s,a}E_i(s,a)/AS} - AS\\
\implies \sum_{s,a}E(s,a)&\leq AS \log_2 \left(\frac{T}{SA} + 1\right)
\end{align}

Now, a new epoch is triggered when $\nu_{i,k} = N_k(s,a)$ or when $N_{i,k}(s,a) = 0$. This gives,
\begin{align}
    E &\leq 1 + 2AS + \sum_{s,a}E_i(s,a)\\
      &\leq 1 + AS\left(2+ \log_2 \left(\frac{T}{SA}\right)\right)\label{eq:from_T_biggeR_2A}
\end{align}
where Equation \eqref{eq:from_T_biggeR_2A} follows from $T > SA$. This completes the proof.
\end{proof}
\if 0
The second lemma is the Azuma-Hoeffding's inequality, which we use to bound Martingale difference sequences. 

\begin{lemma}[Azuma-Hoeffding's Inequality]
Let $X_1,\cdots,X_n$ be a Martingale difference sequence such that $|X_i|\leq c$ for all $i\in\{1,2,\cdots, n\}$, then,
\begin{align}
    \mathbb{P}\left(|\sum_{i=1}^nX_i|\geq \epsilon\right)\leq 2\exp{\left(-\frac{\epsilon^2}{2nc^2}\right)}
\end{align}
\end{lemma}
\fi 

\fi

\newpage

\if 0
We also want to bound the deviation of the estimates of the estimated transition probabilities of the Markov Decision Processes $\mathcal{M}$. For that we use $\ell_1$ deviation bounds from \cite{weissman2003inequalities}. Consider, the following event,
\begin{align}
    \mathcal{E}_t = \left\{\|\hat{P}(\cdot|s, a) - P(\cdot|s, a)\|_1 \leq \sqrt{\frac{14S\log(2AT)}{\max\{1, n(s,a)\}}}\forall (s,a)\in\mathcal{S}\times\mathcal{A}\right\}
\end{align}
where $n=\sum_{t'}^t {\bf 1}_{\{s_{t'} = s, a_{t'}= a\}}$. Then we have, the following lemma:

\begin{lemma}
The probability that the event $\mathcal{E}_t$ fails to occur us upper bounded by $\frac{1}{20t^6}$.
\end{lemma}
\fi
\subsection{Proof of Lemma \ref{lem:deviation_of_probability_estimates}}
\begin{proof}
From the result of \cite{weissman2003inequalities}, the $\ell_1$ distance of a probability distribution over $S$ events with $n$ samples is bounded as:
\begin{align}
    \mathbb{P}\left(\|P(\cdot|s,a) - \hat{P}(\cdot|s,a)\|_1\geq \epsilon\right)&\leq (2^S-2)\exp{\left(-\frac{n(s,a)\epsilon^2}{2}\right)}\nonumber\\
    &\le (2^S)\exp{\left(-\frac{n(s,a)\epsilon^2}{2}\right)}
\end{align}

Thus, for $\epsilon = \sqrt{\frac{2}{n(s,a)}\log(2^S20 SAt^7)}\leq \sqrt{\frac{14S}{n(s,a)}\log(2At)} \leq \sqrt{\frac{14S}{n(s,a)}\log(2AT)}$, we have
\begin{align}
    \mathbb{P}\left(\|P(\cdot|s,a) - \hat{P}(\cdot|s,a)\|_1\geq \sqrt{\frac{14S}{n(s,a)}\log(2At)}\right)&\leq (2^S)\exp{\left(-\frac{n(s,a)}{2}\frac{2}{n(s,a)}\log(2^S20 SAt^7)\right)}\\
    &= 2^S \frac{1}{2^S 20 SAt^7}\\
    &= \frac{1}{20ASt^7}
\end{align}

We sum over the all the possible values of $n(s,a)$ till $t$ time-step to bound the probability that the event $\mathcal{E}_t$ does not occur as:
\begin{align}
    \sum_{n(s,a)=1}^t \frac{1}{20SAt^7} \leq \frac{1}{20SAt^6}
\end{align}

Finally, summing over all the $s,a$, we get
\begin{align}
    \mathbb{P}\left(\|P(\cdot|s,a) -\hat{P}(\cdot|s,a) \|_1\geq \sqrt{\frac{14S}{n(s,a)}\log(2At)}~\forall s,a\right) \leq \frac{1}{20t^6}
\end{align}
\end{proof}

\end{document}